\theoremstyle{plain}
\newtheorem{thm}{\protect\theoremname}
  \theoremstyle{definition}
  \newtheorem{defn}[thm]{\protect\definitionname}
  \theoremstyle{definition}
  \newtheorem{example}[thm]{\protect\examplename}
  \theoremstyle{plain}
  \newtheorem{conjecture}[thm]{\protect\conjecturename}
  \theoremstyle{plain}
  \newtheorem{prop}[thm]{\protect\propositionname}
  \theoremstyle{plain}
  \newtheorem{lem}[thm]{\protect\lemmaname}
  \providecommand{\conjecturename}{Conjecture}
  \providecommand{\definitionname}{Definition}
  \providecommand{\examplename}{Example}
  \providecommand{\lemmaname}{Lemma}
  \providecommand{\propositionname}{Proposition}
\providecommand{\theoremname}{Theorem}
\begin{document}

\title{How Much Restricted Isometry is Needed In Nonconvex Matrix Recovery?\thanks{32nd Conference on Neural Information Processing Systems (NIPS 2018),
Montréal, Canada.}}

\author{Richard Y.~Zhang\\
University of California, Berkeley\\
\url{ryz@alum.mit.edu}\\
\and Cédric Josz\\
University of California, Berkeley\\
\url{cedric.josz@gmail.com}\\
\and Somayeh Sojoudi\\
University of California, Berkeley\\
\url{sojoudi@berkeley.edu}\\
\and Javad Lavaei\\
University of California, Berkeley\\
\url{lavaei@berkeley.edu}}

\date{~}
\maketitle
\begin{abstract}
When the linear measurements of an instance of low-rank matrix recovery
satisfy a restricted isometry property (RIP)\textemdash i.e. they
are approximately norm-preserving\textemdash the problem is known
to contain \emph{no spurious local minima}, so exact recovery is guaranteed.
In this paper, we show that moderate RIP is not enough to eliminate
spurious local minima, so existing results can only hold for near-perfect
RIP. In fact, counterexamples are ubiquitous: we prove that every
$x$ is the spurious local minimum of a rank-1 instance of matrix
recovery that satisfies RIP. One specific counterexample has RIP constant
$\delta=1/2$, but causes randomly initialized stochastic gradient
descent (SGD) to fail 12\% of the time. SGD is frequently able to
avoid and escape spurious local minima, but this empirical result
shows that it can occasionally be defeated by their existence. Hence,
while exact recovery guarantees will likely require a proof of \emph{no
spurious local minima}, arguments based solely on norm preservation
will only be applicable to a narrow set of nearly-isotropic instances.
\end{abstract}

\section{Introduction}

\global\long\def\R{\mathbb{R}}
\global\long\def\A{\mathbf{A}}
\global\long\def\AA{\mathcal{A}}
\global\long\def\tr{\mathrm{tr}}
\global\long\def\vec{\mathrm{vec}\,}
\global\long\def\rank{\mathrm{rank}\,}
\global\long\def\dom{\mathrm{dom}\,}
\global\long\def\S{\mathbb{S}}
\global\long\def\X{\mathbf{X}}
\global\long\def\XX{\mathcal{X}}
\global\long\def\H{\mathbf{H}}
\global\long\def\HH{\mathcal{H}}
\global\long\def\L{\mathscr{L}}
\global\long\def\M{\mathscr{M}}
\global\long\def\D{\mathcal{D}}
\global\long\def\mat{\mathrm{mat}}
\global\long\def\range{\mathrm{range}}
\global\long\def\cond{\mathrm{cond}}
\global\long\def\ub{\mathrm{ub}}
\global\long\def\lb{\mathrm{lb}}
Recently, several important nonconvex problems in machine learning
have been shown to contain \emph{no spurious local minima}~\cite{ge2015escaping,bhojanapalli2016global,ge2016matrix,boumal2016non,ge2017nospurious,sun2015complete,park2017non}.
These problems are easily solved using local search algorithms despite
their nonconvexity, because every local minimum is also a global minimum,
and every saddle-point has sufficiently negative curvature to allow
escape. Formally, the usual first- and second-order necessary conditions
for local optimality (i.e. zero gradient and a positive semidefinite
Hessian) are also \emph{sufficient} for global optimality; satisfying
them to $\epsilon$-accuracy will yield a point within an $\epsilon$-neighborhood
of a globally optimal solution. 

Many of the best-understood nonconvex problems with no spurious local
minima are variants of the \emph{low-rank matrix recovery} problem.
The simplest version (known as \emph{matrix sensing}) seeks to recover
an $n\times n$ positive semidefinite matrix $Z$ of low rank $r\ll n$,
given measurement matrices $A_{1},\ldots,A_{m}$ and noiseless data
$b_{i}=\langle A_{i},Z\rangle$. The usual, nonconvex approach is
to solve the following
\begin{equation}
\underset{x\in\R^{n\times r}}{\text{minimize }}\|\AA(xx^{T})-b\|^{2}\quad\text{ where }\quad\AA(X)=\begin{bmatrix}\langle A_{1},X\rangle & \cdots & \langle A_{m},X\rangle\end{bmatrix}^{T}\label{eq:lrmr}
\end{equation}
to second-order optimality, using a local search algorithm like (stochastic)
gradient descent~\cite{ge2015escaping,jin2017escape} and trust region
Newton's method~\cite{cartis2012complexity,boumal2018global}, starting
from a random initial point. 

Exact recovery of the ground truth $Z$ is guaranteed under the assumption
that $\AA$ satisfies the \emph{restricted isometry property}~\cite{candes2005decoding,candes2006stable,recht2010guaranteed,candes2011tight}
with a sufficiently small constant. The original result is due to
Bhojanapalli~et~al.~\cite{bhojanapalli2016global}, though we adapt
the statement below from a later result by Ge~et~al.~\cite[Theorem~8]{ge2017nospurious}.
(Zhu~et~al.~\cite{zhu2018global} give an equivalent statement
for nonsymmetric matrices.) 
\begin{defn}[Restricted Isometry Property]
The linear map $\AA:\R^{n\times n}\to\R^{m}$ is said to satisfy
$(r,\delta_{r})$\nobreakdash-RIP with constant $0\le\delta_{r}<1$
if there exists a fixed scaling $\gamma>0$ such that for all rank-$r$
matrices $X$:
\begin{equation}
(1-\delta_{r})\|X\|_{F}^{2}\le\gamma\cdot\|\AA(X)\|^{2}\le(1+\delta_{r})\|X\|_{F}^{2}.\label{eq:rip}
\end{equation}
We say that $\AA$ satisfies $r$\nobreakdash-RIP if $\AA$ satisfies
$(r,\delta_{r})$\nobreakdash-RIP with some $\delta_{r}<1$.
\end{defn}
\begin{thm}[No spurious local minima]
\label{thm:exact_recovery}Let $\AA$ satisfy $(2r,\delta_{2r})$\nobreakdash-RIP
with $\delta_{2r}<1/5$. Then, (\ref{eq:lrmr}) has no spurious local
minima: every local minimum $x$ satisfies $xx^{T}=Z$, and every
saddle point has an escape (the Hessian has a negative eigenvalue).
Hence, any algorithm that converges to a second-order critical point
is guaranteed to recover $Z$ exactly.
\end{thm}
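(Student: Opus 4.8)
The plan is to characterize the second-order critical points of (\ref{eq:lrmr}) and to show that each one $x$ already satisfies $xx^{T}=Z$. This is enough: a local minimum is in particular second-order critical, so every local minimum is globally optimal; and a critical point with $xx^{T}\ne Z$ then cannot be second-order critical, so its Hessian must have a negative eigenvalue, which is the promised escape direction (and a direction along which any algorithm that tracks second-order optimality will keep moving). Write $f(x)=\|\AA(xx^{T})-b\|^{2}$, $\Delta=xx^{T}-Z$, and let $S=\AA^{*}\AA(\Delta)$ be the (symmetric) residual matrix. A direct computation gives $\nabla f(x)=4Sx$ and, for every $U\in\R^{n\times r}$, $\langle U,\nabla^{2}f(x)[U]\rangle=2\|\AA(xU^{T}+Ux^{T})\|^{2}+4\langle S,UU^{T}\rangle$; hence $x$ is second-order critical exactly when $Sx=0$ and $2\|\AA(xU^{T}+Ux^{T})\|^{2}+4\langle S,UU^{T}\rangle\ge 0$ for all $U$.

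The key idea is to test this inequality with a single, carefully aligned direction. Factor $Z=zz^{T}$ with $z\in\R^{n\times r}$, let $R^{\star}\in O(r)$ minimize $\|x-zR\|_{F}$, put $\bar z=zR^{\star}$ (so $\bar z\bar z^{T}=Z$, and optimality of $R^{\star}$ makes $\bar z^{T}x$ symmetric positive semidefinite), and take $U=x-\bar z$. The identity $xU^{T}+Ux^{T}=\Delta+UU^{T}$ holds by direct expansion, so substituting it and expanding the square turns the second-order condition into
\[
\|\AA(\Delta)\|^{2}+4\langle\AA(\Delta),\AA(UU^{T})\rangle+\|\AA(UU^{T})\|^{2}\ge 0 .
\]
Now the first-order condition is used to evaluate the cross term exactly. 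Since $Sx=0$ and $S$ is symmetric, $\langle S,xM^{T}\rangle=\langle S,Mx^{T}\rangle=\langle Sx,M\rangle=0$ for every $M$, so expanding $UU^{T}=xx^{T}-x\bar z^{T}-\bar z x^{T}+\bar z\bar z^{T}$ collapses $\langle S,UU^{T}\rangle$ to $\langle\AA(\Delta),\AA(Z)\rangle=\langle\AA(\Delta),\AA(xx^{T})\rangle-\|\AA(\Delta)\|^{2}=-\|\AA(\Delta)\|^{2}$ (the last step again using $Sx=0$). Therefore $\langle\AA(\Delta),\AA(UU^{T})\rangle=-\|\AA(\Delta)\|^{2}$, and the displayed inequality simplifies to $\|\AA(UU^{T})\|^{2}\ge 3\|\AA(\Delta)\|^{2}$.

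To finish I would invoke RIP. Since $UU^{T}$ has rank at most $r\le 2r$, (\ref{eq:rip}) gives $\|\AA(UU^{T})\|^{2}\le(1+\delta_{2r})\gamma^{-1}\|UU^{T}\|_{F}^{2}$, and since $\Delta$ has rank at most $2r$, $\|\AA(\Delta)\|^{2}\ge(1-\delta_{2r})\gamma^{-1}\|\Delta\|_{F}^{2}$; the common scaling cancels and the previous inequality becomes $(1+\delta_{2r})\|UU^{T}\|_{F}^{2}\ge 3(1-\delta_{2r})\|\Delta\|_{F}^{2}$. The remaining ingredient is a purely finite-dimensional estimate tying the aligned factor back to the residual: $\|UU^{T}\|_{F}^{2}\le 2\|\Delta\|_{F}^{2}$. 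In the rank-one case this is an elementary inequality in the scalars $\|x\|^{2},\|\bar z\|^{2},\langle x,\bar z\rangle$, valid precisely because the alignment forces $\langle x,\bar z\rangle\ge 0$ and sharp when $\|x\|=\|\bar z\|$ and $x\perp\bar z$; for general $r$ the constant $2$ persists but the symmetry and positive semidefiniteness of $\bar z^{T}x$ must be used with some care, and I expect this Procrustes-type bound to be the main obstacle (the threshold is sensitive to it: a bound with constant $c$ would yield $\Delta=0$ for $\delta_{2r}<(3-c)/(3+c)$, and $c=2$ produces exactly $\delta_{2r}<1/5$). Granting it, $2(1+\delta_{2r})\|\Delta\|_{F}^{2}\ge 3(1-\delta_{2r})\|\Delta\|_{F}^{2}$, which is impossible for $\Delta\ne 0$ once $5\delta_{2r}<1$. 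Hence $xx^{T}=Z$ at every second-order critical point, and the three assertions of the theorem follow.
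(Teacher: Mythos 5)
The paper does not actually prove Theorem~\ref{thm:exact_recovery}\textemdash it is quoted from Ge~et~al.~\cite[Theorem~8]{ge2017nospurious}\textemdash but your argument is precisely the standard proof of that result: test the Hessian along $U=x-zR^{\star}$, use the identity $xU^{T}+Ux^{T}=\Delta+UU^{T}$ together with the first-order condition $Sx=0$ to reduce second-order criticality to $\|\AA(UU^{T})\|^{2}\ge3\|\AA(\Delta)\|^{2}$, and then combine RIP with the Procrustes-type bound $\|UU^{T}\|_{F}^{2}\le2\|\Delta\|_{F}^{2}$ to force $\Delta=0$ when $\delta_{2r}<1/5$. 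All the computations you carry out are correct, and the one ingredient you leave unproved for general rank $r$\textemdash the bound $\|UU^{T}\|_{F}^{2}\le2\|xx^{T}-Z\|_{F}^{2}$ for the optimally aligned factor\textemdash is exactly Lemma~6 of the cited reference and does hold, so your proof is complete modulo that standard lemma, which you correctly identify as the crux.
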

Standard proofs of Theorem~\ref{thm:exact_recovery} use a \emph{norm-preserving}
argument: if $\AA$ satisfies $(2r,\delta_{2r})$\nobreakdash-RIP
with a small constant $\delta_{2r}$, then we can view the least-squares
residual $\AA(xx^{T})-b$ as a dimension-reduced embedding of the
displacement vector $xx^{T}-Z$, as in
\begin{equation}
\|\AA(xx^{T})-b\|^{2}=\|\AA(xx^{T}-Z)\|^{2}\approx\|xx^{T}-Z\|_{F}^{2}\text{ up to scaling.}\label{eq:rsdp_full}
\end{equation}
The high-dimensional problem of minimizing $\|xx^{T}-Z\|_{F}^{2}$
over $x$ contains no spurious local minima, so its dimension-reduced
embedding (\ref{eq:lrmr}) should satisfy a similar statement. Indeed,
this same argument can be repeated for noisy measurements and nonsymmetric
matrices to result in similar guarantees~\cite{bhojanapalli2016global,ge2017nospurious}.

The norm-preserving argument also extends to ``harder'' choices
of $\AA$ that do not satisfy RIP over its entire domain. In the matrix
completion problem, the RIP-like condition $\|\AA(X)\|^{2}\approx\|X\|_{F}^{2}$
holds only when $X$ is both low-rank and sufficiently dense~\cite{candes2009exact}.
Nevertheless, Ge~et~al.~\cite{ge2016matrix} proved a similar result
to Theorem~\ref{thm:exact_recovery} for this problem, by adding
a regularizing term to the objective. For a detailed introduction
to the norm-preserving argument and its extension with regularizers,
we refer the interested reader to~\cite{ge2016matrix,ge2017nospurious}.

\subsection{How much restricted isometry?}

The RIP threshold $\delta_{2r}<1/5$ in Theorem~\ref{thm:exact_recovery}
is highly conservative\textemdash it is only applicable to nearly-isotropic
measurements like Gaussian measurements. Let us put this point into
perspective by measuring distortion using the \emph{condition number}\footnote{Given a linear map, the condition number measures the ratio in size
between the largest and smallest images, given a unit-sized input.
Within our specific context, the $2r$-restricted condition number
is the smallest $\kappa_{2r}=L/\ell$ such that $\ell\|X\|_{F}^{2}\le\|\AA(X)\|^{2}\le L\|X\|_{F}^{2}$
holds for all rank-$2r$ matrices $X$.} $\kappa_{2r}\in[1,\infty)$. Deterministic linear maps from real-life
applications usually have condition numbers $\kappa_{2r}$ between
$10^{2}$ and $10^{4}$, and these translate to RIP constants $\delta_{2r}=(\kappa_{2r}-1)/(\kappa_{2r}+1)$
between $0.99$ and $0.9999$. By contrast, the RIP threshold $\delta_{2r}<1/5$
requires an equivalent condition number of $\kappa_{2r}=(1+\delta_{2r})/(1-\delta_{2r})<3/2$,
which would be considered \emph{near-perfect} in linear algebra. 

In practice, nonconvex matrix completion works for a much wider class
of problems than those suggested by Theorem~\ref{thm:exact_recovery}~\cite{bottou2008tradeoffs,bottou2010large,recht2013parallel,agarwal2014reliable}.
Indeed, assuming only that $\AA$ satisfies $2r$\nobreakdash-RIP,
solving (\ref{eq:lrmr}) to global optimality is enough to guarantee
exact recovery~\cite[Theorem 3.2]{recht2010guaranteed}. In turn,
stochastic algorithms like stochastic gradient descent (SGD) are often
able to attain global optimality. This disconnect between theory and
practice motivates the following question.

\textbf{Can Theorem~\ref{thm:exact_recovery} be substantially improved\textemdash is
it possible to guarantee the inexistence of spurious local minima
with $(2r,\delta_{2r})$}\nobreakdash-\textbf{RIP and any value of
$\delta_{2r}<1$? }

At a basic level, the question gauges the generality and usefulness
of RIP as a base assumption for nonconvex recovery. Every family of
measure operators $\AA$\textemdash even correlated and ``bad''
measurement ensembles\textemdash will eventually come to satisfy $2r$\nobreakdash-RIP
as the number of measurements $m$ grows large. Indeed, given $m\ge n(n+1)/2$
linearly independent measurements, the operator $\AA$ becomes invertible,
and hence trivially $2r$\nobreakdash-RIP. In this limit, recovering
the ground truth $Z$ from noiseless measurements is as easy as solving
a system of linear equations. Yet, it remains unclear whether nonconvex
recovery is guaranteed to succeed.

At a higher level, the question also gauges the wisdom of exact recovery
guarantees through ``no spurious local minima''. It may be sufficient
but not necessary; exact recovery may actually hinge on SGD's ability
to avoid and escape spurious local minima when they do exist. Indeed,
there is growing empirical evidence that SGD outmaneuvers the ``optimization
landscape'' of nonconvex functions~\cite{bottou2008tradeoffs,bottou2010large,krizhevsky2012imagenet,recht2013parallel,agarwal2014reliable},
and achieves some global properties~\cite{hardt2016train,zhang2016understanding,wilson2017marginal}.
It remains unclear whether the success of SGD for matrix recovery
should be attributed to the inexistence of spurious local minima,
or to some global property of SGD.

\subsection{Our results}

In this paper, we give a strong negative answer to the question above.
Consider the counterexample below, which satisfies $(2r,\delta_{2r})$\nobreakdash-RIP
with $\delta_{2r}=1/2$, but nevertheless contains a spurious local
minimum that causes SGD to fail in 12\% of trials.
\begin{example}
\label{exa:simple}Consider the following $(2,1/2)$\nobreakdash-RIP
instance of (\ref{eq:lrmr}) with matrices
\[
Z=\begin{bmatrix}1 & 0\\
0 & 0
\end{bmatrix},\quad A_{1}=\begin{bmatrix}\sqrt{2} & 0\\
0 & 1/\sqrt{2}
\end{bmatrix},\quad A_{2}=\begin{bmatrix}0 & \sqrt{3/2}\\
\sqrt{3/2} & 0
\end{bmatrix},\quad A_{3}=\begin{bmatrix}0 & 0\\
0 & \sqrt{3/2}
\end{bmatrix}.
\]
Note that the associated operator $\AA$ is invertible and satisfies
$\|X\|_{F}^{2}\le\|\AA(X)\|^{2}\le3\|X\|_{F}^{2}$ for all $X$. Nevertheless,
the point $x=(0,1/\sqrt{2})$ satisfies second-order optimality,
\[
f(x)\equiv\|\AA(xx^{T}-Z)\|^{2}=\frac{3}{2},\qquad\nabla f(x)=\begin{bmatrix}0\\
0
\end{bmatrix},\qquad\nabla^{2}f(x)=\begin{bmatrix}0 & 0\\
0 & 8
\end{bmatrix},
\]
and randomly initialized SGD can indeed become stranded around this
point, as shown in Figure~\ref{fig:simple}. Repeating these trials
100,000 times yields 87,947 successful trials, for a failure rate
of $12.1\pm0.3\%$ to three standard deviations. 
\end{example}
\begin{figure}
\includegraphics[width=0.5\columnwidth]{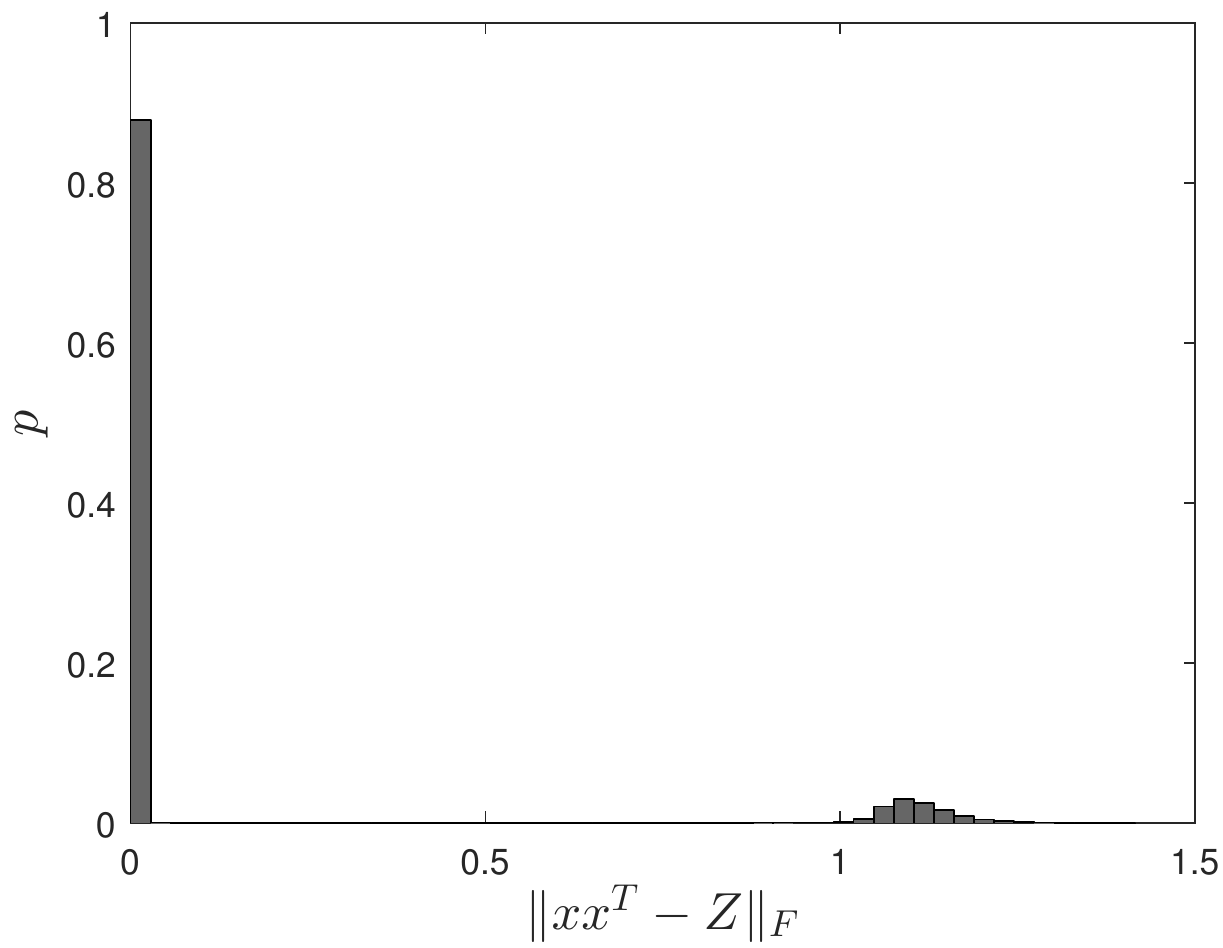}\hspace*{\fill}\includegraphics[width=0.5\columnwidth]{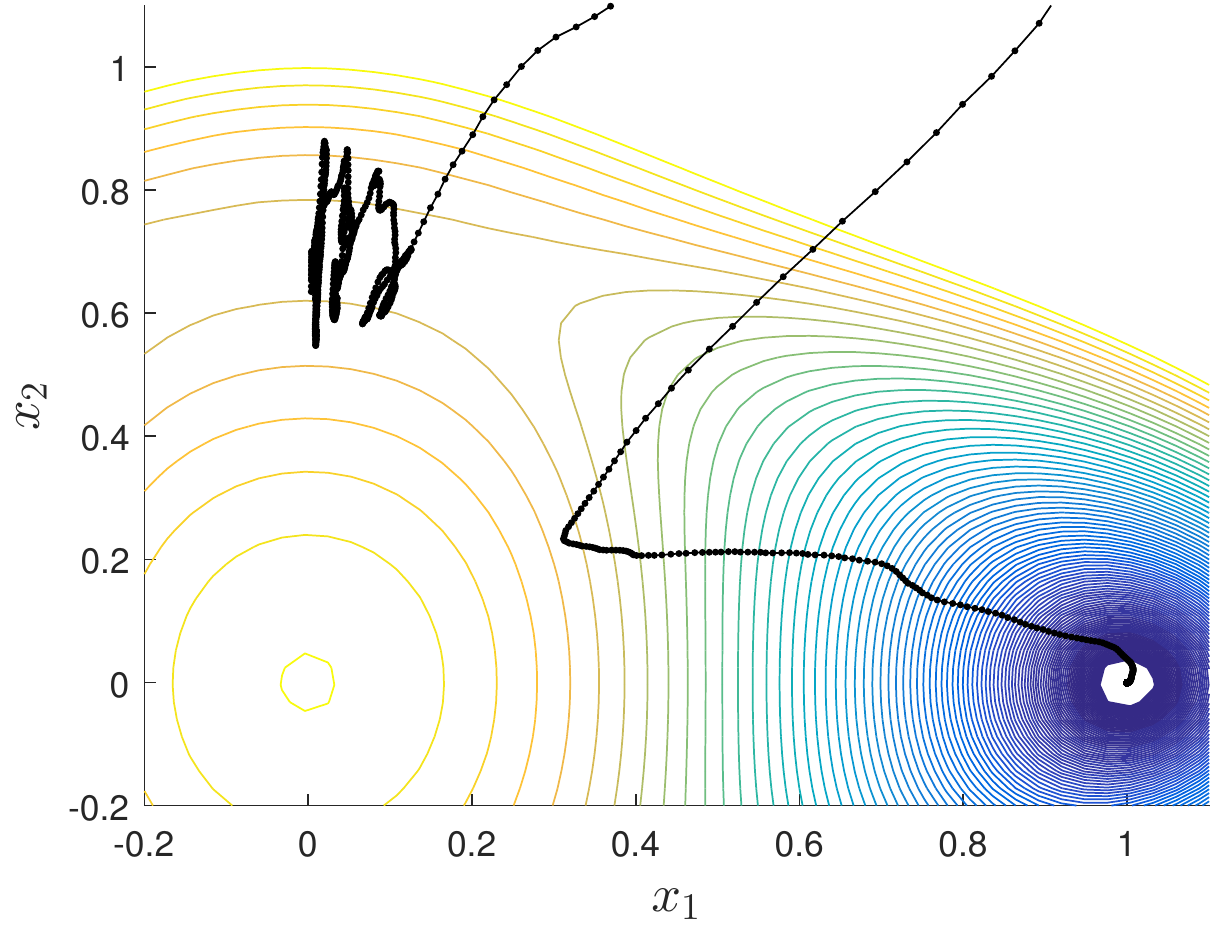}

\caption{\label{fig:simple}Solving Example~\ref{exa:simple} using stochastic
gradient descent randomly initialized with the standard Gaussian.
\textbf{(Left)} Histogram over 100,000 trials of final error $\|xx^{T}-Z\|_{F}$
after $10^{3}$ steps with learning rate $\alpha=10^{-3}$ and momentum
$\beta=0.9$. \textbf{(Right)} Two typical stochastic gradient descent
trajectories, showing convergence to the spurious local minimum at
$(0,1/\sqrt{2})$, and to the ground truth at $(1,0)$.}
\end{figure}
Accordingly, RIP-based exact recovery guarantees like Theorem~\ref{thm:exact_recovery}
cannot be improved beyond $\delta_{2r}<1/2$. Otherwise, spurious
local minima can exist, and SGD may become trapped. Using a local
search algorithm with a random initialization, ``no spurious local
minima'' is not only sufficient for exact recovery, but also necessary.

In fact,\emph{ }there exists an infinite number of counterexamples
like Example~\ref{exa:simple}. In Section~\ref{sec:Solutions},
we prove that, in the rank-1 case, \emph{almost every} choice of $x,Z$
generates an instance of (\ref{eq:lrmr}) with a strict spurious local
minimum.
\begin{thm}[Informal]
\label{thm:informal}Let $x,z\in\R^{n}$ be nonzero and not colinear.
Then, there exists an instance of (\ref{eq:lrmr}) satisfying $(n,\delta_{n})$\nobreakdash-RIP
with $\delta_{n}<1$ that has $Z=zz^{T}$ as the ground truth and
$x$ as a strict spurious local minimum, i.e. with zero gradient and
a positive definite Hessian. Moreover, $\delta_{n}$ is bounded in
terms of the length ratio $\rho=\|x\|/\|z\|$ and the incidence angle
$\phi$ satisfying $x^{T}z=\|x\|\|z\|\cos\phi$ as 
\[
\delta_{n}\le\frac{\tau+\sqrt{1-\zeta^{2}}}{\tau+1}\qquad\text{where }\zeta=\frac{\sin^{2}\phi}{\sqrt{(\rho^{2}-1)^{2}+2\rho^{2}\sin^{2}\phi}},\quad\tau=\frac{2\sqrt{\rho^{2}+\rho^{-2}}}{\zeta^{2}}
\]
\end{thm}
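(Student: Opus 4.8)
The plan is to realize the instance through its Gram operator and reduce everything to a two-dimensional linear-algebra problem. Write $S=xx^{T}-zz^{T}$ and let $\mathcal{H}=\AA^{T}\AA$ be the self-adjoint positive semidefinite operator on symmetric matrices with $\|\AA(X)\|^{2}=\langle X,\mathcal{H}(X)\rangle$, so that $f(u)=\langle uu^{T}-Z,\,\mathcal{H}(uu^{T}-Z)\rangle$. A direct differentiation gives $\nabla f(x)=4\,\mathcal{H}(S)\,x$ and, for every $v$,
\[
v^{T}\nabla^{2}f(x)\,v=2\,\langle W,\mathcal{H}(W)\rangle+4\,v^{T}\mathcal{H}(S)\,v,\qquad W:=vx^{T}+xv^{T}.
\]
Note that when $\mathcal{H}\succ0$ one has $\langle W,\mathcal{H}(W)\rangle>0$ whenever $v\neq0$, since $vx^{T}+xv^{T}=0$ forces the rank-one matrix $vx^{T}$ to be antisymmetric, hence zero, hence $v=0$ (as $x\neq0$). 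It therefore suffices to build a self-adjoint $\mathcal{H}\succ0$ with $\mathcal{H}(S)\,x=0$ (first-order optimality) and with $\mathcal{H}(S)$ "not too negative", so that the positive term $2\langle W,\mathcal{H}(W)\rangle$ dominates $4v^{T}\mathcal{H}(S)v$; then $x$ is a strict local minimum. It is automatically \emph{spurious}, because $x,z$ non-colinear gives $xx^{T}\neq zz^{T}$ and $f(x)=\langle S,\mathcal{H}(S)\rangle>0$. Finally, $(n,\delta_{n})$-RIP is equivalent to $\mathcal{H}$ having condition number $\kappa$ with $\delta_{n}=(\kappa-1)/(\kappa+1)$ after the scaling $\gamma$, so only $\kappa$ must be controlled; if RIP is wanted over all of $\R^{n\times n}$ rather than over symmetric matrices, one appends $\binom{n}{2}$ measurements forming an orthonormal basis of the antisymmetric matrices, which leaves $f$ unchanged.

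Second, I would localize the construction to the plane $V=\mathrm{span}\{x,z\}$. Choose an orthonormal basis $e_{1}=x/\|x\|$, $e_{2}\in V$, and split each symmetric matrix into its $V\times V$ block, its $V\times V^{\perp}$ block, and its $V^{\perp}\times V^{\perp}$ block. Let $\mathcal{H}$ act as a fixed scalar $\lambda>0$ on the last two and as a positive definite operator $\mathcal{H}_{V}$ on $\S^{2}$ on the $V\times V$ block. Since $S$ lives in that block, with $S_{0}:=S|_{V}$ the constraints reduce to choosing $\mathcal{H}_{V}$ so that $P_{0}:=\mathcal{H}_{V}(S_{0})$ satisfies $P_{0}e_{1}=0$, i.e.\ $P_{0}=\operatorname{diag}(0,b)$ in the $e_{1},e_{2}$ basis (the scalar $b$ is itself free, and may be negative). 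Writing $v=v_{V}+v_{\perp}$ and $W=W_{1}+W_{2}$ with $W_{1}$ the $V\times V$ block of $W$, one gets
\[
v^{T}\nabla^{2}f(x)\,v=2\,\langle W_{1},\mathcal{H}_{V}(W_{1})\rangle+2\lambda\|W_{2}\|^{2}+4\,v_{V}^{T}P_{0}\,v_{V},
\]
so the only binding requirement is that $2\langle W_{1},\mathcal{H}_{V}(W_{1})\rangle+4v_{V}^{T}P_{0}v_{V}>0$ for all nonzero $v_{V}\in V$, a one-parameter family of inequalities (parametrize $v_{V}$ by an angle $\psi$), and moreover $\mathrm{cond}(\mathcal{H})=\mathrm{cond}(\mathcal{H}_{V})$ once $\lambda$ is placed between the extreme eigenvalues of $\mathcal{H}_{V}$.

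Third — the computational heart — I would minimize $\mathrm{cond}(\mathcal{H}_{V})$ over positive definite self-adjoint operators on the three-dimensional space $\S^{2}$ subject to $\mathcal{H}_{V}(S_{0})=\operatorname{diag}(0,b)$ and the angular Hessian inequality above (with $b$ also free). Diagonalizing $S_{0}$, whose eigenvalues $\lambda_{1}>0>\lambda_{2}$ obey $\lambda_{1}+\lambda_{2}=\|x\|^{2}-\|z\|^{2}$ and $\lambda_{1}\lambda_{2}=-\|x\|^{2}\|z\|^{2}\sin^{2}\phi$, so that $\|S_{0}\|_{F}^{2}=(\|x\|^{2}-\|z\|^{2})^{2}+2\|x\|^{2}\|z\|^{2}\sin^{2}\phi$, reduces the problem to minimizing the ratio of extreme eigenvalues of an explicit small matrix. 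Carrying this out and rewriting through $\rho=\|x\|/\|z\|$ and $\phi$ produces exactly
\[
\zeta=\frac{\sin^{2}\phi}{\sqrt{(\rho^{2}-1)^{2}+2\rho^{2}\sin^{2}\phi}},\qquad\tau=\frac{2\sqrt{\rho^{2}+\rho^{-2}}}{\zeta^{2}},
\]
and, after converting the optimal condition number $\kappa$ via $\delta_{n}=(\kappa-1)/(\kappa+1)$, the bound $\delta_{n}\le(\tau+\sqrt{1-\zeta^{2}})/(\tau+1)$. Non-colinearity gives $\sin\phi\neq0$, hence $\zeta\in(0,1)$ and $\sqrt{1-\zeta^{2}}<1$, whence $\delta_{n}<1$, as claimed.

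I expect the main obstacle to be this last step: choosing the right parametrization of the positive definite operators $\mathcal{H}_{V}$ compatible with the prescribed image, handling the angular Hessian constraint (which is what allows $b<0$ and sharpens the constant beyond the crude "$\mathcal{H}(S)\succeq0$" choice), and minimizing the spectral ratio in closed form while verifying feasibility of the optimizer. By contrast, the reductions in the first two paragraphs are routine, and it is precisely there that the rank-one hypothesis enters: for $r\ge2$ the symmetry $u\mapsto uU$ with orthogonal $U$ forces a nontrivial Hessian kernel, so a strict (positive definite Hessian) local minimum cannot exist.
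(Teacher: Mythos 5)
Your setup is sound and matches the paper's framework: working with the Gram operator $\HH=\AA^{T}\AA$, trading the RIP constant for the condition number of $\HH$ via $\delta_{n}=(\kappa-1)/(\kappa+1)$, writing the first- and second-order conditions as $\HH(S)x=0$ and $2\langle W,\HH(W)\rangle+4v^{T}\HH(S)v>0$, and observing that non-colinearity makes $x$ automatically spurious. The reduction to the plane $\mathrm{span}\{x,z\}$ and the remark about padding with antisymmetric measurements are also fine. But the proposal stops exactly where the theorem's content begins. The entire quantitative claim---that one can achieve $\delta_{n}\le(\tau+\sqrt{1-\zeta^{2}})/(\tau+1)$ with the stated $\zeta$ and $\tau$---is disposed of with ``carrying this out \ldots{} produces exactly'' the formulas, with no computation. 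In the paper this step occupies all of the appendix: the first-order problem is solved by LMI duality (reducing, via a closed-form trace-ratio lemma and the eigenvalues of the rank-two dual certificate $\L^{T}(y)=ey^{T}\X^{T}+\X ye^{T}$, to computing the angle $\theta$ between $e=\vec(xx^{T}-zz^{T})$ and $\range(\X)$, which is where $\zeta=\sin\theta$ comes from), and the second-order condition is then enforced by the explicit perturbation $\H_{\tau}=\tau P_{e\perp}+\H_{0}$, with $\tau$ determined by the two estimates $\|\mat(\H_{0}e)\|\le\sqrt{1+\rho^{4}}\|z\|^{2}$ and $\lambda_{\min}(\X^{T}P_{e\perp}\X)\ge2\|x\|^{2}\zeta^{2}$.

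There is also a conceptual misprediction in your plan. You propose to \emph{minimize} $\cond(\HH_{V})$ over the reduced problem and assert that the optimum equals the stated bound. It does not: the stated $\tau$ arises from a particular \emph{feasible} construction combined with non-tight norm estimates, so the theorem only asserts an upper bound on the achievable $\delta_{n}$ (note the ``$\eta\ge$'' in Theorem~\ref{thm:soc}); the true optimum of your reduced minimization would be strictly better in general and does not have this closed form (the paper resorts to numerically solving the LMI in Section~\ref{sec:MinDelta} precisely because the exact threshold is not known in closed form). So even granting your reduction, executing your plan as stated would not land on the claimed formulas---you would need to abandon exact minimization and instead exhibit a concrete feasible $\HH$, which is the missing construction.
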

It is therefore impossible to establish ``no spurious local minima''
guarantees unless the RIP constant $\delta$ is small. This is a strong
negative result on the generality and usefulness of RIP as a base
assumption, and also on the wider norm-preserving argument described
earlier in the introduction. In Section~\ref{sec:MinDelta}, we provide
strong empirical evidence for the following \emph{sharp} version of
Theorem~\ref{thm:exact_recovery}. 
\begin{conjecture}
Let $\AA$ satisfy $(2r,\delta_{2r})$\nobreakdash-RIP with $\delta_{2r}<1/2$.
Then, (\ref{eq:lrmr}) has no spurious local minima. Moreover, the
figure of $1/2$ is sharp due to the existence of Example~\ref{exa:simple}.
\end{conjecture}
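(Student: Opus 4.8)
The ``moreover'' clause needs no work: Example~\ref{exa:simple} is a $(2,1/2)$\nobreakdash-RIP instance with a strict spurious local minimum, so the open interval $\delta_{2r}<1/2$ is the most any ``no spurious local minima'' statement could claim. The content is the forward implication, which is exactly Theorem~\ref{thm:exact_recovery} with the constant $1/5$ improved to $1/2$. The plan is to re-run the norm-preserving proof of Theorem~\ref{thm:exact_recovery}, replace every crude estimate by the sharpest available RIP inequality, and then optimize over whatever choices remain free; the conjecture is the assertion that this accounting closes exactly at $1/2$, with Example~\ref{exa:simple} as the extremal instance.

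Concretely, set $M=\gamma\,\AA^{*}\AA$, so that $\gamma f(x)=\langle U,MU\rangle$ with $U=xx^{T}-Z$, and $(2r,\delta_{2r})$\nobreakdash-RIP reads $(1-\delta_{2r})\|W\|_{F}^{2}\le\langle W,MW\rangle\le(1+\delta_{2r})\|W\|_{F}^{2}$ for $\rank W\le2r$, or equivalently, via polarization, $|\langle V,(M-I)W\rangle|\le\tfrac{1}{2}\delta_{2r}(\|V\|_{F}^{2}+\|W\|_{F}^{2})$ whenever $V\pm W$ both have rank $\le2r$. At a critical point $x$ the first- and second-order conditions become $(MU)x=0$ and $2\langle \dot X,M\dot X\rangle+4\langle MU,dx\,dx^{T}\rangle\ge0$ for all $dx$, where $\dot X=dx\,x^{T}+x\,dx^{T}$. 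Using invariance under $A_{i}\mapsto QA_{i}Q^{T}$ together with rescaling, one normalizes the pair $(x,z)$ into a fixed subspace of dimension at most $2r$ with $\|z\|=1$, after which only the restricted action of $M$ on the low-rank matrices built from $x$ and $z$ matters, and that is controlled completely by (\ref{eq:rip}). One would then feed into the second-order condition the ``escape direction'' that rotates $x$ towards $z$ --- the direction that already certifies a negative eigenvalue when $M=I$ --- or an optimal mixture of it with eigenvectors of $MU$, eliminate $(MU)x$ via the first-order condition, and keep exact track of how the $\delta_{2r}$\nobreakdash-error terms accumulate. For $r>1$ one must, as in the existing proof, split rank-$2r$ displacements such as $U$ into the rank-$r$ pieces $xx^{T}$ and $Z$ so that every inner product stays within the reach of $2r$\nobreakdash-RIP, and check that this splitting itself costs nothing relative to $1/2$.

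I expect the decisive obstacle to be precisely the choice of test direction. The published proofs of Theorem~\ref{thm:exact_recovery} are loose by roughly a factor of two or three, and the slack is not concentrated in one identifiable inequality, so pushing the threshold to $1/2$ seems to require a genuinely tight certificate rather than a tighter version of the same estimates. The route I would take is to phrase the search for such a certificate as a semidefinite (or sum-of-squares) feasibility problem in the variables $M$ and $dx$ --- ``produce a direction along which the Hessian is strictly negative whenever $U\ne0$ and $\delta_{2r}<1/2$'' --- designed so that it is tight against Example~\ref{exa:simple}, i.e. so that every RIP inequality it uses holds with equality when specialized to that example (there $\nabla^{2}f$ has a zero eigenvalue, the signature of the borderline case). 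An equivalent but more transparent framing is a continuity/bifurcation argument: a new spurious local minimum can appear, as $\delta_{2r}$ grows, only when the smallest Hessian eigenvalue at some non-global critical point passes through zero, and one would bound the least $\delta_{2r}$ at which this is possible. The numerical evidence of Section~\ref{sec:MinDelta} is the reason to expect either route to terminate at exactly $1/2$. A secondary issue to settle is whether the sharp constant is genuinely a rank-$1$ phenomenon --- as both Example~\ref{exa:simple} and Theorem~\ref{thm:informal} are --- so that the conjecture holds verbatim for every $r$, or whether the $r>1$ threshold must be re-examined separately.
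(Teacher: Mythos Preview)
The statement you are trying to prove is labelled a \emph{Conjecture} in the paper, and the paper does not prove it. What the paper offers in lieu of a proof is the numerical evidence of Section~\ref{sec:MinDelta}: for many random choices of $(x,z)$ the convex programs~(\ref{eq:delta_star-1}) and~(\ref{eq:delta_star-1-1}) both return values $\ge 1/2$, which together with Example~\ref{exa:simple} brackets the unknown threshold $\delta^{\star}$ at $1/2$. That is all. There is no proof in the paper to compare your proposal against.

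Your write-up is candid about this: it is a research outline, not a proof. You correctly isolate the sharpness clause as already settled by Example~\ref{exa:simple}, and you correctly identify the real content as pushing the $1/5$ of Theorem~\ref{thm:exact_recovery} to $1/2$. But the body of your proposal does not close any step; it lists two plausible strategies (sharpen the norm-preserving argument with an optimal test direction; or recast the problem as an SDP/SOS certificate or a bifurcation analysis) and then names the obstacles you anticipate. Neither strategy is carried far enough to constitute a proof attempt that can be checked. In particular, the assertion that ``this accounting closes exactly at $1/2$'' is precisely the conjecture restated, and your own remark that ``the slack is not concentrated in one identifiable inequality'' is the honest acknowledgement that you do not yet see how to make it close. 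The bifurcation framing is appealing, but bounding the least $\delta_{2r}$ at which a zero Hessian eigenvalue can occur at a spurious critical point is again the same question; you have not said what new leverage the reformulation buys. So the gap is fundamental: the paper leaves this open, and your proposal does not fill it.
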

How is the \emph{practical} performance of SGD affected by spurious
local minima? In Section~\ref{sec:SGDescape}, we apply randomly
initialized SGD to instances of (\ref{eq:lrmr}) engineered to contain
spurious local minima. In one case, SGD recovers the ground truth
with a 100\% success rate, as if the spurious local minima did not
exist. But in another case, SGD fails in 59 of 1,000 trials, for a
positive failure rate of $5.90\pm2.24\%$ to three standard deviations.
Examining the failure cases, we observe that SGD indeed becomes trapped
around a spurious local minimum, similar to Figure~\ref{fig:simple}
in Example~\ref{exa:simple}.

\subsection{Related work}

There have been considerable recent interest in understanding the
empirical ``hardness'' of nonconvex optimization, in view of its
well-established theoretical difficulties. Nonconvex functions contain
saddle points and spurious local minima, and local search algorithms
may become trapped in them. Recent work have generally found the matrix
sensing problem to be ``easy'', particularly under an RIP-like incoherence
assumption. Our results in this paper counters this intuition, showing\textemdash perhaps
surprisingly\textemdash that the problem is generically ``hard''
even under RIP. 

\textbf{Comparison to convex recovery.} Classical theory for the low-rank
matrix recovery problem is based on convex relaxation: replacing $xx^{T}$
in (\ref{eq:lrmr}) by a convex term $X\succeq0$, and augmenting
the objective with a trace penalty $\lambda\cdot\tr(X)$ to induce
a low-rank solution~\cite{candes2009exact,recht2010guaranteed,candes2010power,candes2011tight}.
The convex approach enjoys RIP-based exact recovery guarantees~\cite{candes2011tight},
but these are also fundamentally restricted to small RIP constants~\cite{cai2013sharp,wang2013bounds}\textemdash in
direct analogy with our results for nonconvex recovery. In practice,
convex recovery is usually much more expensive than nonconvex recovery,
because it requires optimizing over an $n\times n$ matrix variable
instead of an $n\times r$ vector-like variable. On the other hand,
it is statistically consistent~\cite{bach2008consistency}, and guaranteed
to succeed with $m\ge\frac{1}{2}n(n+1)$ noiseless, linearly independent
measurements. By comparison, our results show that nonconvex recovery
can still fail in this regime.

\textbf{Convergence to spurious local minima.} Recent results on ``no
spurious local minima'' are often established using a norm-preserving
argument: the problem at hand is the low-dimension embedding of a
canonical problem known to contain no spurious local minima~\cite{ge2015escaping,sun2015complete,sun2016geometric,bhojanapalli2016global,ge2016matrix,ge2017nospurious,park2017non,zhu2018global}.
While the approach is widely applicable in its scope, our results
in this paper finds it to be restrictive in the problem data. More
specifically, the measurement matrices $A_{1},\ldots,A_{m}$ must
come from a nearly-isotropic ensemble like the Gaussian and the sparse
binary.

\textbf{Special initialization schemes. }An alternative way to guarantee
exact recovery is to place the initial point sufficiently close to
the global optimum~\cite{keshavan2010matrix,keshavan2010matrixnoisy,jain2013low,zheng2015convergent,zhao2015nonconvex,sun2016guaranteed}.
This approach is more general because it does not require a global
``no spurious local minima'' guarantee. On the other hand, good
initializations are highly problem-specific and difficult to generalize.
Our results show that spurious local minima can exist arbitrarily
close to the solution. Hence, exact recovery guarantees must give
proof of local attraction, beyond simply starting close to the ground
truth.

\textbf{Ability of SGD to escape spurious local minima. }Practitioners
have long known that stochastic gradient descent (SGD) enjoys properties
inherently suitable for the sort of nonconvex optimization problems
that appear in machine learning~\cite{krizhevsky2012imagenet,bottou2008tradeoffs},
and that it is well-suited for generalizing unseen data~\cite{hardt2016train,zhang2016understanding,wilson2017marginal}.
Its specific behavior is yet not well understood, but it is commonly
conjectured that SGD outperforms classically ``better'' algorithms
like BFGS because it is able to avoid and escape spurious local minima.
Our empirical findings in Section~\ref{sec:SGDescape} partially
confirms this suspicion, showing that randomly initialized SGD is
sometimes able to avoid and escape spurious local minima as if they
did not exist. In other cases, however, SGD can indeed become stuck
at a local minimum, thereby resulting in a positive failure rate. 

\subsection*{Notation}

We use $x$ to refer to any candidate point, and $Z=zz^{T}$ to refer
to a rank-$r$ factorization of the ground truth $Z$. For clarity,
we use lower-case $x,z$ even when these are $n\times r$ matrices. 

The sets $\R^{n\times n}\supset\S^{n}$ are the space of $n\times n$
real matrices and real symmetric matrices, and $\langle X,Y\rangle\equiv\tr(X^{T}Y)$
and $\|X\|_{F}^{2}\equiv\langle X,X\rangle$ are the Frobenius inner
product and norm. We write $X\succeq0$ (resp. $X\succ0$) if $X$
is positive semidefinite (resp. positive definite). Given a matrix
$M$, its spectral norm is $\|M\|$, and its eigenvalues are $\lambda_{1}(M),\ldots,\lambda_{n}(M)$.
If $M=M^{T}$, then $\lambda_{1}(M)\ge\cdots\ge\lambda_{n}(M)$ and
$\lambda_{\max}(M)\equiv\lambda_{1}(M)$, $\lambda_{\min}(M)\equiv\lambda_{n}(M)$.
If $M$ is invertible, then its condition number is $\cond(M)=\|M\|\|M^{-1}\|$;
if not, then $\cond(M)=\infty$. 

The vectorization operator $\vec:\R^{n\times n}\to\R^{n^{2}}$ preserves
inner products $\langle X,Y\rangle=\vec(X)^{T}\vec(Y)$ and Euclidean
norms $\|X\|_{F}=\|\vec(X)\|$. In each case, the matricization operator
$\mat(\cdot)$ is the inverse of $\vec(\cdot)$.

\section{\label{sec:LMI_formu}Key idea: Spurious local minima via convex
optimization}

Given arbitrary $x\in\R^{n\times r}$ and rank-$r$ positive semidefinite
matrix $Z\in\S^{n}$, consider the problem of finding an instance
of (\ref{eq:lrmr}) with $Z$ as the ground truth and $x$ as a spurious
local minimum. While not entirely obvious, this problem is actually
convex, because the first- and second-order optimality conditions
associated with (\ref{eq:lrmr}) are \emph{linear matrix inequality}
(LMI) constraints~\cite{boyd1994linear} with respect to the \emph{kernel}
operator $\HH\equiv\AA^{T}\AA$. The problem of finding an instance
of (\ref{eq:lrmr}) that also satisfies RIP is indeed nonconvex. However,
we can use the \emph{condition number} of $\HH$ as a surrogate for
the RIP constant $\delta$ of $\AA$: if the former is finite, then
the latter is guaranteed to be less than 1. The resulting optimization
is convex, and can be numerically solved using an interior-point method,
like those implemented in SeDuMi~\cite{sturm1999using}, SDPT3~\cite{toh1999sdpt3},
and MOSEK~\cite{andersen2000mosek}, to high accuracy. 

We begin by fixing some definitions. Given a choice of $\AA:\S^{n}\to\R^{m}$
and the ground truth $Z=zz^{T}$, we define the nonconvex objective
\begin{align}
f:\R^{n\times r} & \to\R\qquad\qquad\text{such that} & f(x) & =\|\AA(xx^{T}-zz^{T})\|^{2}\label{eq:fdef}
\end{align}
whose value is always nonnegative by construction. If the point $x$
attains $f(x)=0$, then we call it a \emph{global minimum}; otherwise,
we call it a \emph{spurious} point. Under RIP, $x$ is a global minimum
if and only if $xx^{T}=zz^{T}$~\cite[Theorem 3.2]{recht2010guaranteed}.
The point $x$ is said to be a \emph{local minimum} if $f(x)\le f(x')$
holds for all $x'$ within a local neighborhood of $x$. If $x$ is
a local minimum, then it must satisfy the first and second-order \emph{necessary}
optimality conditions (with some fixed $\mu\ge0$):
\begin{align}
\langle\nabla f(x),u\rangle & =2\langle\AA(xx^{T}-zz^{T}),\AA(xu^{T}+ux^{T})\rangle=0 & \forall u & \in\R^{n\times r},\label{eq:foc}\\
\langle\nabla^{2}f(x)u,u\rangle & =2\langle\AA(xx^{T}-zz^{T}),uu^{T}\rangle+\|\AA(xu^{T}+ux^{T})\|^{2}\ge\mu\|u\|_{F}^{2} & \forall u & \in\R^{n\times r}.\label{eq:soc}
\end{align}
Conversely, if $x$ satisfies the second-order \emph{sufficient} optimality
conditions, that is (\ref{eq:foc})-(\ref{eq:soc}) with $\mu>0$,
then it is a local minimum. Local search algorithms are only guaranteed
to converge to a \emph{first-order critical point} $x$ satisfying
(\ref{eq:foc}), or a \emph{second-order critical point} $x$ satisfying
(\ref{eq:foc})-(\ref{eq:soc}) with $\mu\ge0$. The latter class
of algorithms include stochastic gradient descent~\cite{ge2015escaping},
randomized and noisy gradient descent~\cite{ge2015escaping,lee2016gradient,jin2017escape,du2017gradient},
and various trust-region methods~\cite{conn2000trust,nesterov2006cubic,cartis2012complexity,boumal2018global}.

Given arbitrary choices of $x,z\in\R^{n\times r}$, we formulate the
problem of picking an $\AA$ satisfying (\ref{eq:foc}) and (\ref{eq:soc})
as an LMI feasibility. First, we define $\A=[\vec(A_{1}),\ldots,\vec(A_{m})]^{T}$
satisfying $\A\cdot\vec(X)=\AA(X)$ for all $X$ as the matrix representation
of the operator $\AA$. Then, we rewrite (\ref{eq:foc}) and (\ref{eq:soc})
as $2\cdot\L(\A^{T}\A)=0$ and $2\cdot\M(\A^{T}\A)\succeq\mu I$,
where the linear operators $\L$ and $\M$ are defined
\begin{align}
 & \L:\S^{n^{2}}\to\R^{n\times r}\qquad\text{such that} & \L(\H) & \equiv2\cdot\X^{T}\H e,\label{eq:Ldef}\\
 & \M:\S^{n^{2}}\to\S^{nr\times nr}\qquad\text{such that} & \M(\H) & \equiv2\cdot[I_{r}\otimes\mat(\H e)^{T}]+\X^{T}\H\X,\label{eq:Mdef}
\end{align}
with respect to the error vector $e=\vec(xx^{T}-zz^{T})$ and the
$n^{2}\times nr$ matrix $\X$ that implements the symmetric product
operator $\X\cdot\vec(u)=\vec(xu^{T}+ux^{T})$. To compute a choice
of $\A$ satisfying $\L(\A^{T}\A)=0$ and $\M(\A^{T}\A)\succeq0$,
we solve the following LMI feasibility problem
\begin{equation}
\underset{\H}{\text{maximize }}\quad0\qquad\text{ subject to }\qquad\L(\H)=0,\quad\M(\H)\succeq\mu I,\quad\H\succeq0,\label{eq:lmifeas}
\end{equation}
and factor a feasible $\H$ back into $\A^{T}\A$, e.g. using Cholesky
factorization or an eigendecomposition. Once a matrix representation
$\A$ is found, we recover the matrices $A_{1},\ldots,A_{m}$ implementing
the operator $\AA$ by matricizing each row of $\A$.

Now, the problem of picking $\AA$ with the smallest condition number
may be formulated as the following LMI optimization
\begin{equation}
\underset{\H,\eta}{\text{maximize }}\quad\eta\qquad\text{ subject to }\qquad\eta I\preceq\H\preceq I,\quad\L(\H)=0,\quad\M(\H)\succeq\mu I,\quad\H\succeq0,\label{eq:lmiopt}
\end{equation}
with solution $\H^{\star},\eta^{\star}$. Then, $1/\eta^{\star}$
is the best condition number achievable, and any $\AA$ recovered
from $\H^{\star}$ will satisfy
\[
\left(1-\frac{1-\eta^{\star}}{1+\eta^{\star}}\right)\|X\|^{2}\le\frac{2}{1+\eta^{\star}}\|\AA(X)\|_{F}^{2}\le\left(1+\frac{1-\eta^{\star}}{1+\eta^{\star}}\right)\|X\|^{2}
\]
for all $X$, that is, with \emph{any rank}. As such, $\AA$ is $(n,\delta_{n})$\nobreakdash-RIP
with $\delta_{n}=(1-\eta^{\star})/(1+\eta^{\star})$, and hence also
$(p,\delta_{p})$\nobreakdash-RIP with $\delta_{p}\le\delta_{n}$
for all $p\in\{1,\ldots,n\}$; see e.g.~\cite{recht2010guaranteed,candes2011tight}.
If the optimal value $\eta^{\star}$ is strictly positive, then the
recovered $\AA$ yields an RIP instance of (\ref{eq:lrmr}) with $zz^{T}$
as the ground truth and $x$ as a spurious local minimum, as desired.

It is worth emphasizing that a small condition number\textemdash a
large $\eta^{\star}$ in (\ref{eq:lmiopt})\textemdash will always
yield a small RIP constant $\delta_{n}$, which then bounds all other
RIP constants via $\delta_{n}\ge\delta_{p}$ for all $p\in\{1,\ldots,n\}$.
However, the converse direction is far less useful, as the value of
$\delta_{n}=1$ does not preclude $\delta_{p}$ with $p<n$ from being
small. 

\section{\label{sec:Solutions}Closed-form solutions}

It turns out that the LMI problem (\ref{eq:lmiopt}) in the rank-1
case is sufficiently simple that it can be solved in closed-form.
(All proofs are given in the Appendix.) Let $x,z\in\R^{n}$ be arbitrary
nonzero vectors, and define
\begin{align}
\rho & \equiv\frac{\|x\|}{\|z\|}, & \phi & \equiv\arccos\left(\frac{x^{T}z}{\|x\|\|z\|}\right),\label{eq:rho_phi_def}
\end{align}
as their associated length ratio and incidence angle. We begin by
examining the prevalence of spurious critical points.
\begin{thm}[First-order optimality]
\label{thm:foc}The best-conditioned $\H^{\star}\succeq0$ such that
$\L(\H^{\star})=0$ satisfies
\begin{equation}
\cond(\H^{\star})=\frac{1+\sqrt{1-\zeta^{2}}}{1-\sqrt{1-\zeta^{2}}}\qquad\text{ where }\qquad\zeta\equiv\frac{\sin\phi}{\sqrt{(\rho^{2}-1)^{2}+2\rho^{2}\sin^{2}\phi}}.\label{eq:lb}
\end{equation}
Hence, if $\phi\ne0$, then $x$ is a first-order critical point for
an instance of (\ref{eq:lrmr}) satisfying $(2,\delta)$\nobreakdash-RIP
with $\delta=\sqrt{1-\zeta^{2}}<1$ given in (\ref{eq:lb}).
\end{thm}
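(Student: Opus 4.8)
\emph{Proof idea.} The plan is to solve the condition-number minimization ``$\min\cond(\H)$ subject to $\H\succeq0$ and $\L(\H)=0$'' in closed form by collapsing it to a two-by-two eigenvalue problem. The first step is to rewrite the constraint: since $\L(\H)=2\X^{T}\H e$ with $e=\vec(xx^{T}-zz^{T})$, the equation $\L(\H)=0$ is equivalent to $\H e\perp\range(\X)$, where $\range(\X)=\{\vec(xu^{T}+ux^{T}):u\in\R^{n}\}$ is an $n$-dimensional subspace; so the constraint only restricts how $\H$ acts on the single vector $e$. Decompose $e=e_{\parallel}+e_{\perp}$, with $e_{\parallel}=\Pi_{\range(\X)}e$ and $e_{\perp}\perp\range(\X)$, and put $a=\|e_{\parallel}\|_{F}$, $b=\|e_{\perp}\|_{F}$; both are nonzero exactly when $\phi\ne0$ (if $\phi=0$ then $e\in\range(\X)$, forcing $\H e=0$ and $\cond(\H)=\infty$, consistent with $\zeta=0$). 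The crux --- and the step I expect to be the main obstacle --- is showing that restricting attention to the plane $\mathrm{span}\{e_{\parallel},e_{\perp}\}$ loses nothing; I would establish this through matching lower and upper bounds rather than a ``without loss of generality'' reduction.

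\emph{Lower bound.} Let $g=e_{\parallel}/a$ and $h=e_{\perp}/b$, an orthonormal pair with $e=ag+bh$. For any feasible $\H$, since $g\in\range(\X)$ we have $g^{T}\H e=0$, so the two-by-two compression $M=\bigl[\begin{smallmatrix}p&q\\q&r\end{smallmatrix}\bigr]$ of $\H$ onto $\mathrm{span}\{g,h\}$ (with $p=g^{T}\H g$, $q=g^{T}\H h$, $r=h^{T}\H h$) satisfies $ap+bq=0$. Eigenvalue interlacing gives $\cond(\H)\ge\cond(M)$, so it remains to minimize $\cond(M)$ over positive semidefinite $M$ with $ap+bq=0$. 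Normalizing $p=1$ forces $q=-t$ with $t=a/b$, and using $\bigl(\tfrac{\cond(M)-1}{\cond(M)+1}\bigr)^{2}=\tfrac{(1-r)^{2}+4t^{2}}{(1+r)^{2}}$ reduces this to a one-variable minimization over $r>t^{2}$, solved by $r=1+2t^{2}$, where $\cond(M)=\tfrac{\sqrt{1+t^{2}}+t}{\sqrt{1+t^{2}}-t}$.

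\emph{Upper bound and the form of $\zeta$.} Conversely, take $\H^{\star}$ equal to $\bigl[\begin{smallmatrix}1&-t\\-t&1+2t^{2}\end{smallmatrix}\bigr]$ in the basis $(g,h)$ of $\mathrm{span}\{g,h\}$ and equal to the identity on the orthogonal complement. This is positive definite (the block has determinant $1+t^{2}>0$ and eigenvalues $1+t^{2}\pm t\sqrt{1+t^{2}}$, which straddle $1$); it is feasible, because $\H^{\star}e$ has vanishing $g$-component, $1\cdot a-t\cdot b=0$, hence lies in $\mathrm{span}\{h\}\subseteq\range(\X)^{\perp}$; and its condition number equals the block's, $\tfrac{\sqrt{1+t^{2}}+t}{\sqrt{1+t^{2}}-t}$. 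With the lower bound this yields $\cond(\H^{\star})=\tfrac{\sqrt{1+t^{2}}+t}{\sqrt{1+t^{2}}-t}=\tfrac{1+\delta}{1-\delta}$ where $\delta=t/\sqrt{1+t^{2}}$; equivalently $\delta=\sqrt{1-\zeta^{2}}$ for $\zeta=1/\sqrt{1+t^{2}}=b/\sqrt{a^{2}+b^{2}}$, and by the Pythagorean identity $\|e\|_{F}^{2}=a^{2}+b^{2}$ this reads $\zeta=\|e_{\perp}\|_{F}/\|xx^{T}-zz^{T}\|_{F}$.

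\emph{Evaluation and conclusion.} To produce the closed form, rotate coordinates so that $x$ points along the first coordinate axis; then $\range(\X)$ becomes the set of symmetric matrices supported on the first row and column, and since $z$ lies in the plane of the first two axes, $xx^{T}-zz^{T}$ is supported on the leading $2\times2$ block, so $e_{\perp}$ is exactly its $(2,2)$ entry, which equals $-\|z\|^{2}\sin^{2}\phi$. Hence $\|e_{\perp}\|_{F}=\|z\|^{2}\sin^{2}\phi$, while $\|xx^{T}-zz^{T}\|_{F}^{2}=\|x\|^{4}-2(x^{T}z)^{2}+\|z\|^{4}=\|z\|^{4}\bigl[(\rho^{2}-1)^{2}+2\rho^{2}\sin^{2}\phi\bigr]$, which gives $\zeta=\tfrac{\sin^{2}\phi}{\sqrt{(\rho^{2}-1)^{2}+2\rho^{2}\sin^{2}\phi}}$ and the stated value of $\cond(\H^{\star})$. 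Finally, when $\phi\ne0$ we have $\zeta>0$ and $\delta<1$, so factoring $\H^{\star}=\A^{T}\A$ and matricizing the rows of $\A$ produces, via the construction of Section~\ref{sec:LMI_formu}, a $(2,\delta)$-RIP instance of (\ref{eq:lrmr}) with ground truth $zz^{T}$ at which $x$ satisfies the first-order condition (\ref{eq:foc}).
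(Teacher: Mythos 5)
Your proof is correct, and it takes a genuinely different route from the paper's. The paper argues by LMI duality: it writes the condition-number minimization as a primal--dual pair, invokes Slater's condition and strong duality, reduces the dual via Lemma~\ref{lem:dual_prob} to minimizing $\tfrac{1-\cos\theta_y}{1+\cos\theta_y}$ over the rank-2 matrices $\L^{T}(y)$ (Lemma~\ref{lem:eig_bound}), and then computes the extremal angle with Lemma~\ref{lem:angle_bound}. You instead work entirely on the primal side: you observe that $\L(\H)=0$ only constrains $\H e$ to be orthogonal to $\range(\X)$, get a lower bound by Cauchy interlacing applied to the compression of $\H$ onto $\mathrm{span}\{e_{\parallel},e_{\perp}\}$ (where the constraint becomes the single linear equation $ap+bq=0$), solve the resulting $2\times2$ problem by hand, and then exhibit a feasible $\H^{\star}$ (a rank-two correction of the identity supported on that plane) that attains the bound. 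The two approaches meet at the same geometric quantity: your $\zeta=\|e_{\perp}\|/\|e\|$ is exactly $\sin\theta$ from Lemma~\ref{lem:angle_bound}, and your coordinate computation of $\|e_{\perp}\|=\|z\|^{2}\sin^{2}\phi$ reproduces that lemma's projection argument. What your route buys is elementarity and an explicit optimizer: no duality theory is needed, and the closed-form $\H^{\star}$ (block $\bigl[\begin{smallmatrix}1&-t\\-t&1+2t^{2}\end{smallmatrix}\bigr]$ plus identity) is produced directly, whereas the paper's argument certifies the optimal value through the dual and leaves the primal optimizer implicit. What the paper's route buys is that the same dual machinery (Lemma~\ref{lem:dual_prob}) is reused conceptually in setting up the second-order analysis. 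Two small remarks: your final expression has $\sin^{2}\phi$ in the numerator of $\zeta$, which disagrees with the displayed statement (\ref{eq:lb}) but agrees with Lemma~\ref{lem:angle_bound} and Theorem~\ref{thm:informal}, so the discrepancy is a typo in the theorem statement rather than an error on your part; and in the lower bound you should note explicitly that $p=g^{T}\H g=0$ forces $\H g=0$ and hence $\cond(\H)=\infty$, which justifies the normalization $p=1$.
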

The point $x=0$ is always a local maximum for $f$, and hence a spurious
first-order critical point. With a perfect RIP constant $\delta=0$,
Theorem~\ref{thm:foc} says that $x=0$ is also the only spurious
first-order critical point. Otherwise, spurious first-order critical
points may exist elsewhere, even when the RIP constant $\delta$ is
arbitrarily close to zero. This result highlights the importance of
converging to second-order optimality, in order to avoid getting stuck
at a spurious first-order critical point. 

Next, we examine the prevalence of spurious local minima. 
\begin{thm}[Second-order optimality]
\label{thm:soc}There exists $\H$ satisfying $\L(\H)=0,$ $\M(\H)\succeq\mu I,$
and $\eta I\preceq\H\preceq I$ where
\[
\eta\ge\frac{1}{1+\tau}\cdot\left(\frac{1+\sqrt{1-\zeta^{2}}}{1-\sqrt{1-\zeta^{2}}}\right),\qquad\mu=\frac{\|z\|^{2}}{1+\tau},\qquad\tau\equiv\frac{2\sqrt{\rho^{2}+\rho^{-2}}}{\zeta^{2}}
\]
and $\zeta$ is defined in (\ref{eq:lb}). Hence, if $\phi\ne0$ and
$\rho>0$ is finite, then $x$ is a strict local minimum for an instance
of (\ref{eq:lrmr}) satisfying $(2,\delta)$\nobreakdash-RIP with
$\delta=(\tau+\sqrt{1-\zeta^{2}})/(1+\tau)<1$.
\end{thm}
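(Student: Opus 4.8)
The plan is to obtain $\H$ by \emph{correcting} the first-order-optimal kernel $\H^{\star}$ furnished by Theorem~\ref{thm:foc}. That theorem gives $\H^{\star}$ in closed form, with spectrum normalizable to $[\eta^{\star},1]$ where $\eta^{\star}=(1-\sqrt{1-\zeta^{2}})/(1+\sqrt{1-\zeta^{2}})$, and with $\L(\H^{\star})=0$. The starting observation is that $\L(\H^{\star})=0$ makes the linear part of $\M(\H^{\star})$ degenerate: because $\X^{T}\vec(M)=(M+M^{T})x$ for every $M$, the constraint $\L(\H)=2\X^{T}\H e=0$ amounts to $Wx=0$ for $W\equiv\tfrac12(\mat(\H e)+\mat(\H e)^{T})$, so $\M(\H^{\star})=2W^{\star}+\X^{T}\H^{\star}\X$ may well be indefinite on $x^{\perp}$. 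This is exactly why $\H^{\star}$ on its own need not witness a local minimum, and it tells us what the correction must supply: positive curvature in the directions $u\perp x$ — note that along $u\parallel x$ the form $u^{T}\M(\H)u$ is automatically positive, equal to $4\,v^{T}\H v$ with $v=\vec(xx^{T})\neq0$ — without disturbing the identity $\L(\H)=0$, i.e.\ without moving $\H e$ out of $\range(\X)^{\perp}$.

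The natural correction is a positive semidefinite perturbation. Take $\H=\H^{\star}+t\Delta$ with $\Delta\succeq0$, $t>0$, and $\Delta e=0$; the condition $\Delta e=0$ preserves $\L(\H)=\L(\H^{\star})=0$ and leaves the linear part of $\M$ untouched, so that $\M(\H)=2W^{\star}+\X^{T}\H^{\star}\X+t\,\X^{T}\Delta\X$. The crucial gain is that $\X^{T}\Delta\X$ can be made \emph{positive definite}: this requires $\range(\X)\cap\ker\Delta=\{0\}$, which is achievable (keeping $\mathrm{span}\{e\}\subseteq\ker\Delta$) precisely when $e\notin\range(\X)=\{xv^{T}+vx^{T}:v\in\R^{n}\}$, equivalently when $z$ is not colinear with $x$, i.e.\ $\phi\neq0$. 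Using $\X^{T}\X=2(\|x\|^{2}I+xx^{T})$ and $\X^{T}e=2(\|x\|^{2}x-(x^{T}z)z)$, one evaluates $\lambda_{\min}(\X^{T}\Delta\X)$ in closed form — an essentially two-dimensional computation inside $\mathrm{span}\{x,z\}$ — and it comes out strictly positive, an explicit function of $\rho$ and $\phi$.

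What remains is calibration. One bounds $\|\M(\H^{\star})\|$ from above and $\lambda_{\min}(\X^{T}\Delta\X)$ from below, both explicitly from the forms just recorded, and then picks $t$ large enough that $\M(\H)\succeq\mu I$ with $\mu=\|z\|^{2}/(1+\tau)$. Weyl's inequality gives $\lambda_{\min}(\H)\ge\lambda_{\min}(\H^{\star})$ and $\lambda_{\max}(\H)\le\lambda_{\max}(\H^{\star})+t\|\Delta\|$, so after rescaling to enforce $\H\preceq I$ one obtains $\eta I\preceq\H\preceq I$ with $\eta$ as claimed, the appearance of $1+\tau$ being exactly the price of the correction. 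Finally, $\H$ is fed through the construction of Section~\ref{sec:LMI_formu}: any $\AA$ recovered from it satisfies $(n,\delta_{n})$-RIP with $\delta_{n}=(1-\eta)/(1+\eta)=(\tau+\sqrt{1-\zeta^{2}})/(1+\tau)$, which is strictly below $1$ because $\eta>0$ whenever $\tau<\infty$, i.e.\ whenever $\phi\neq0$ and $0<\rho<\infty$; and since $\M(\H)\succeq\mu I$ with $\mu>0$, the point $x$ satisfies the second-order \emph{sufficient} conditions for the recovered instance and is therefore a strict spurious local minimum.

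I expect the calibration to be the real obstacle: the bound on $\|\M(\H^{\star})\|$ and the lower bound on $\lambda_{\min}(\X^{T}\Delta\X)$ must be sharp enough that $\mu$ and $\eta$ collapse to the advertised closed forms rather than mere order-of-magnitude estimates, which forces one to use the precise expression for $\H^{\star}$ from the proof of Theorem~\ref{thm:foc} and to choose the subspace supporting $\Delta$ optimally — a larger or differently weighted $\Delta$ than the first one that comes to mind may be needed to land the factor $1+\tau$ exactly. A secondary, bookkeeping point is that only the symmetric part of $\mat(\H e)$ enters both $\L$ and $\M$, which is what makes the reduction ``$\L(\H)=0\iff Wx=0$'' and the displayed formula for $\M(\H)$ legitimate.
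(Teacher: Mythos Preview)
Your plan is exactly the paper's: it sets $\Delta=P_{e\perp}=I-ee^{T}/\|e\|^{2}$, the full orthogonal projector onto $e^{\perp}$, which is the maximal $\Delta\succeq0$ with $\|\Delta\|\le1$ and $\Delta e=0$, and then calibrates $t=\tau$ via the rather crude bounds $\|\mat(\H^{\star}e)\|\le\|\H^{\star}\|\,\|e\|\le\sqrt{1+\rho^{4}}\,\|z\|^{2}$ and $\lambda_{\min}(\X^{T}P_{e\perp}\X)\ge\|\X v\|^{2}\sin^{2}\theta/\|v\|^{2}\ge2\|x\|^{2}\zeta^{2}$. Your worry about needing sharp estimates is therefore misplaced: the advertised closed forms for $\tau,\mu,\eta$ are simply what these loose inequalities produce, not the outcome of any delicate optimization of $\Delta$.
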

If $\phi\ne0$ and $\rho>0$, then $x$ is guaranteed to be a strict
local minimum for a problem instance satisfying $2$\nobreakdash-RIP.
Hence, we must conclude that spurious local minima are ubiquitous.
The associated RIP constant $\delta<1$ is not too much worse than
than the figure quoted in Theorem~\ref{thm:foc}. On the other hand,
spurious local minima must cease to exist once $\delta<1/5$ according
to Theorem~\ref{thm:exact_recovery}. 

\section{\label{sec:MinDelta}Experiment 1: Minimum $\delta$ with spurious
local minima}

What is smallest RIP constant $\delta_{2r}$ that still admits an
instance of (\ref{eq:lrmr}) with spurious local minima? Let us define
the threshold value as the following 
\begin{gather}
\delta^{\star}=\min_{x,Z,\AA}\{\delta:\nabla f(x)=0,\quad\nabla^{2}f(x)\succeq0,\quad\AA\text{ satisfies }(2r,\delta)\text{-RIP}\}.\label{eq:delta_star}
\end{gather}
Here, we write $f(x)=\|\AA(xx^{T}-Z)\|^{2}$, and optimize over the
spurious local minimum $x\in\R^{n\times r},$ the rank-$r$ ground
truth $Z\succeq0$, and the linear operator $\AA:\R^{n\times n}\to\R^{m}$.
Note that $\delta^{\star}$ gives a ``no spurious local minima''
guarantee, due to the inexistence of counterexamples.
\begin{prop}
Let $\AA$ satisfy $(2r,\delta_{2r})$\nobreakdash-RIP. If $\delta_{2r}<\delta^{\star}$,
then (\ref{eq:lrmr}) has no spurious local minimum.
\end{prop}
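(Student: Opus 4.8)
The plan is a one-line contrapositive argument read straight off the definition~(\ref{eq:delta_star}) of $\delta^{\star}$. Suppose that some instance of~(\ref{eq:lrmr})\textemdash specified by a rank-$r$ ground truth $Z\succeq0$ and a linear operator $\AA$ satisfying $(2r,\delta_{2r})$-RIP\textemdash admits a spurious local minimum $\hat{x}\in\R^{n\times r}$. I want to deduce that $\delta_{2r}\ge\delta^{\star}$, which is exactly the contrapositive of the Proposition.

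First I would note that the objective $f(x)=\|\AA(xx^{T}-Z)\|^{2}$ is a smooth polynomial map on the unconstrained space $\R^{n\times r}$, so the local minimum $\hat{x}$ necessarily satisfies the first- and second-order necessary conditions $\nabla f(\hat{x})=0$ and $\nabla^{2}f(\hat{x})\succeq0$; these are precisely~(\ref{eq:foc})-(\ref{eq:soc}) with $\mu=0$. Being \emph{spurious} means $f(\hat{x})>0$, equivalently $\hat{x}\hat{x}^{T}\ne Z$, so $\hat{x}$ lies in the set over which the minimization in~(\ref{eq:delta_star}) is performed. Consequently the triple $(\hat{x},Z,\AA)$ is feasible for~(\ref{eq:delta_star}) with objective value $\delta_{2r}$, and since $\delta^{\star}$ is the infimum (minimum) of that objective over all feasible triples, we obtain $\delta^{\star}\le\delta_{2r}$. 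Reading this implication contrapositively gives the claim: if $\delta_{2r}<\delta^{\star}$, then no instance with RIP constant $\delta_{2r}$ can have a spurious local minimum.

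There is no genuine obstacle in this argument; the only points worth spelling out carefully are (i) that smoothness of $f$ is what converts the geometric notion of ``local minimum'' into the algebraic conditions appearing in~(\ref{eq:delta_star}), (ii) that the word ``spurious'' encodes the implicit constraint $f(\hat{x})>0$, without which $\delta^{\star}$ would collapse to $0$ by taking $\hat{x}\hat{x}^{T}=Z$, and (iii) that the argument only uses that $\delta^{\star}$ lower-bounds the objective over feasible triples, so it is valid whether or not the infimum in~(\ref{eq:delta_star}) is actually attained. In short, the Proposition is essentially a restatement of the definition of $\delta^{\star}$, and the proof should be no longer than the paragraph above.
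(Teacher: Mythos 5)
Your argument is correct and is exactly the paper's proof: a spurious local minimum makes the triple $(\hat{x},Z,\AA)$ feasible for the minimization defining $\delta^{\star}$, so $\delta_{2r}\ge\delta^{\star}$, contradicting the hypothesis. The extra remarks on smoothness, the meaning of ``spurious,'' and attainment of the infimum are sensible elaborations but do not change the route.
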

\begin{proof}
Suppose that (\ref{eq:lrmr}) contained a spurious local minimum $x$
for ground truth $Z$. Then, substituting this choice of $x,Z,\AA$
into (\ref{eq:delta_star}) would contradict the definition of $\delta^{\star}$
as the minimum. 
\end{proof}
Our convex formulation in Section~\ref{sec:LMI_formu} bounds $\delta^{\star}$
from above. Specifically, our LMI problem (\ref{eq:lmiopt}) with
optimal value $\eta^{\star}$ is equivalent to the following variant
of (\ref{eq:delta_star}) 
\begin{gather}
\delta_{\ub}(x,Z)=\min_{\AA}\{\delta:\nabla f(x)=0,\quad\nabla^{2}f(x)\succeq0,\quad\AA\text{ satisfies }(n,\delta)\text{-RIP}\},\label{eq:delta_star-1}
\end{gather}
with optimal value $\delta_{\ub}(x,Z)=(1-\eta^{\star})/(1+\eta^{\star})$.
Now, (\ref{eq:delta_star-1}) gives an upper-bound on (\ref{eq:delta_star})
because $(n,\delta)$\nobreakdash-RIP is a \emph{sufficient} condition
for $(2r,\delta)$\nobreakdash-RIP. Hence, we have $\delta_{\ub}(x,Z)\ge\delta^{\star}$
for every valid choice of $x$ and $Z$.

The same convex formulation can be modified to bound $\delta^{\star}$
from below\footnote{We thank an anonymous reviewer for this key insight.}.
Specifically, a necessary condition for $\AA$ to satisfy $(2r,\delta_{2r})$\nobreakdash-RIP
is the following
\begin{equation}
(1-\delta_{2r})\|UYU^{T}\|_{F}^{2}\le\|\AA(UYU^{T})\|^{2}\le(1+\delta_{2r})\|UYU^{T}\|_{F}^{2}\qquad\forall Y\in\R^{2r\times2r}\label{eq:RIP-nec}
\end{equation}
where $U$ is a \emph{fixed} $n\times2r$ matrix. This is a convex
linear matrix inequality; substituting (\ref{eq:RIP-nec}) into (\ref{eq:delta_star})
in lieu of of $(2r,\delta)$\nobreakdash-RIP yields a convex optimization
problem
\begin{gather}
\delta_{\lb}(x,Z,\mathcal{U})=\min_{\AA}\{\delta:\nabla f(x)=0,\quad\nabla^{2}f(x)\succeq0,\quad\text{(\ref{eq:RIP-nec})}\},\label{eq:delta_star-1-1}
\end{gather}
that generates lower-bounds $\delta^{\star}\ge\delta_{\lb}(x,Z,U)$.

\textbf{Our best upper-bound is likely $\delta^{\star}\le1/2$.} The
existence of Example~\ref{exa:simple} gives the upper-bound of $\delta^{\star}\le1/2$.
To improve upon this bound, we randomly sample $x,z\in\R^{n\times r}$
i.i.d. from the standard Gaussian, and evaluate $\delta_{\ub}(x,zz^{T})$
using MOSEK~\cite{andersen2000mosek}. We perform the experiment
for 3 hours on each tuple $(n,r)\in\{1,2,\ldots,10\}\times\{1,2\}$
but obtain $\delta_{\ub}(x,zz^{T})\ge1/2$ for every $x$ and $z$
considered.

\textbf{The threshold is likely $\delta^{\star}=1/2$.} Now, we randomly
sample $x,z\in\R^{n\times r}$ i.i.d. from the standard Gaussian.
For each fixed $\{x,z\}$, we set $U=[x,z]$ and evaluate $\delta_{\lb}(x,Z,U)$
using MOSEK~\cite{andersen2000mosek}. We perform the same experiment
as the above, but find that $\delta_{\lb}(x,zz^{T},U)\ge1/2$ for
every $x$ and $z$ considered. Combined with the existence of the
upper-bound $\delta^{\star}=1/2$, these experiments strongly suggest
that $\delta^{\star}=1/2$. 

\section{\label{sec:SGDescape}Experiment 2: SGD escapes spurious local minima}

\begin{figure}
\begin{minipage}[b]{0.5\linewidth}%
\includegraphics[width=1\columnwidth]{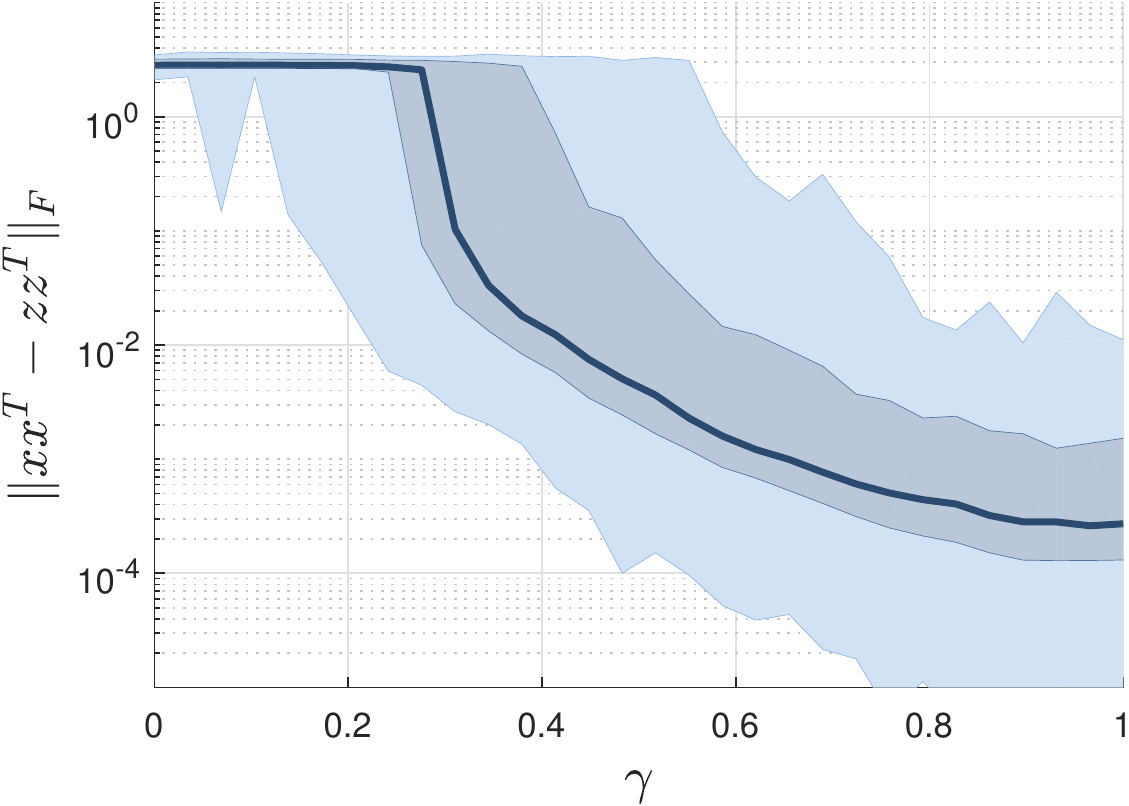}%
\end{minipage}%
\begin{minipage}[b]{0.5\linewidth}%
\includegraphics[width=1\columnwidth]{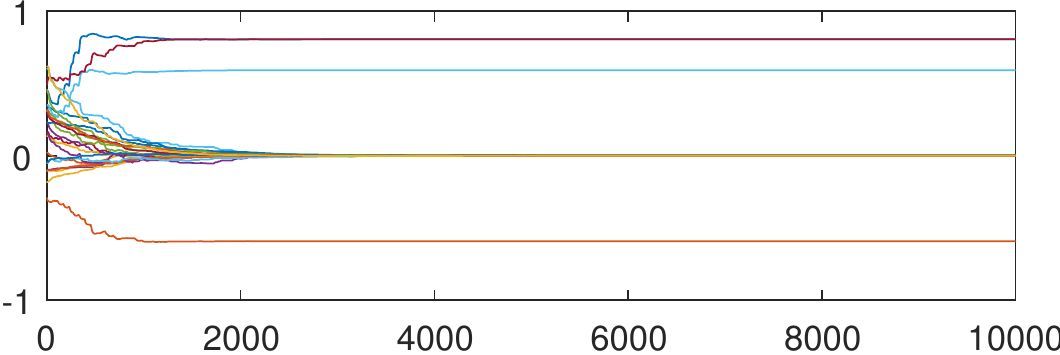}

\includegraphics[width=1\columnwidth]{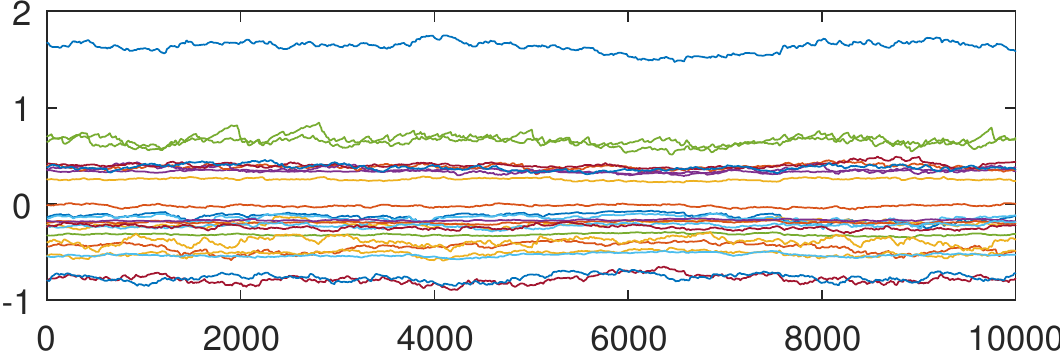}%
\end{minipage}

\caption{\label{fig:bad} ``Bad'' instance ($n=12$, $r=2$) with RIP constant
$\delta=0.973$ and spurious local min at $x_{loc}$ satisfying $\|xx^{T}\|_{F}/\|zz^{T}\|_{F}\approx4$.
Here, $\gamma$ controls initial SGD $x=\gamma w+(1-\gamma)x_{loc}$
where $w$ is random Gaussian. \textbf{(Left)} Error distribution
after 10,000 SGD steps (rate $10^{-4}$, momentum $0.9$) over 1,000
trials. Line: median. Inner bands: 5\%-95\% quantile. Outer bands:
min/max. \textbf{(Right top)} Random initialization with $\gamma=1$;
\textbf{(Right bottom)} Initialization at local min with $\gamma=0$.}

\medskip{}

\begin{minipage}[b]{0.5\linewidth}%
\includegraphics[width=1\columnwidth]{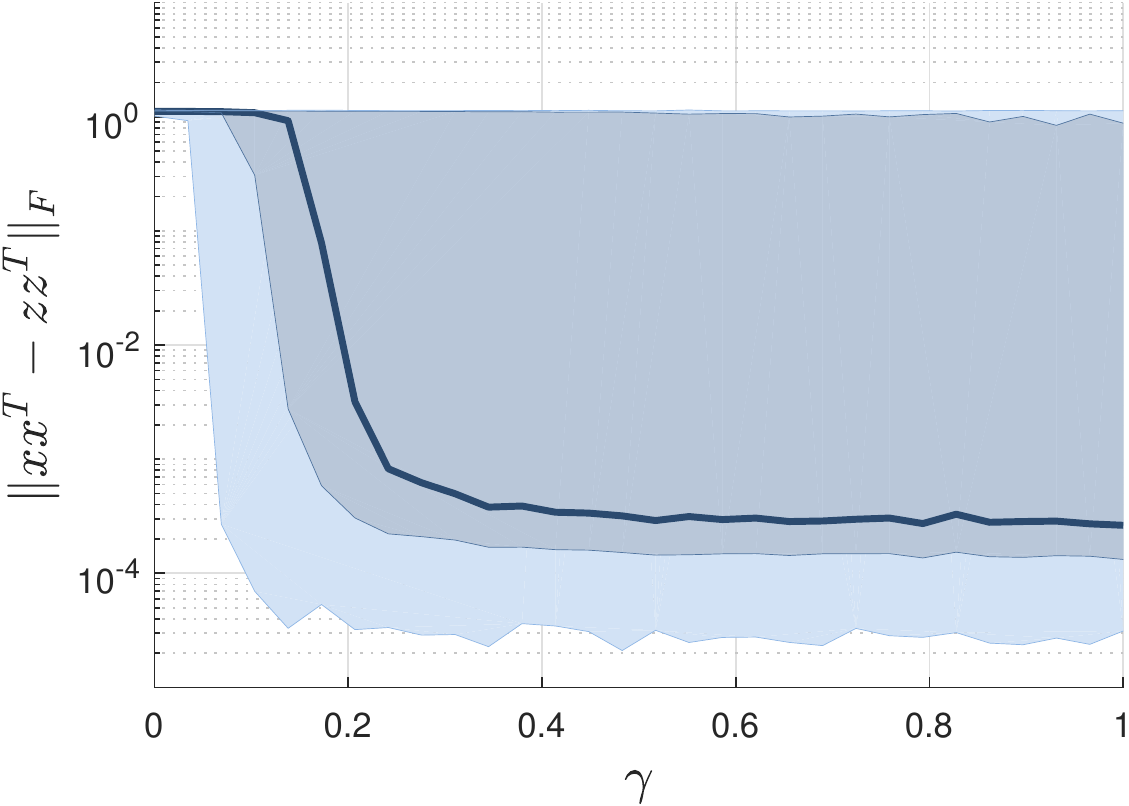}%
\end{minipage}%
\begin{minipage}[b]{0.5\linewidth}%
\includegraphics[width=1\columnwidth]{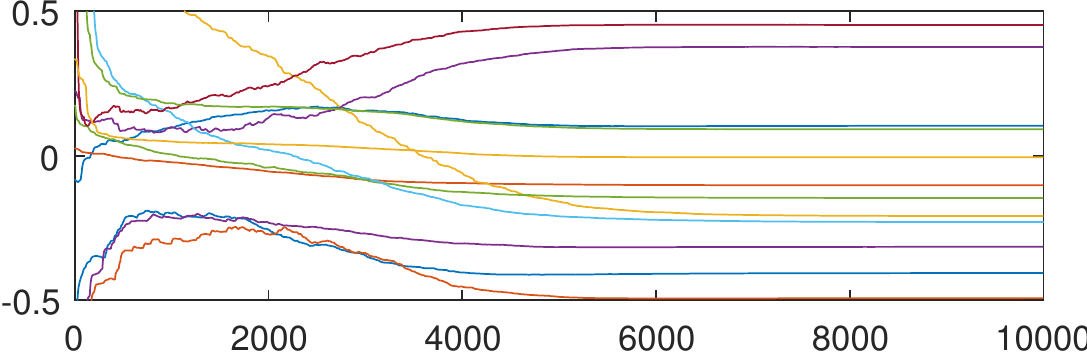}

\includegraphics[width=1\columnwidth]{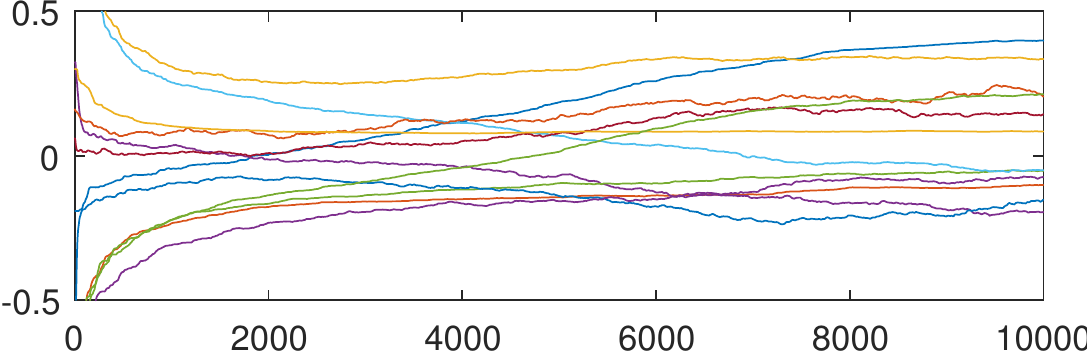}%
\end{minipage}

\caption{\label{fig:good}``Good'' instance ($n=12$, $r=1$) with RIP constant
$\delta=1/2$ and spurious local min at $x_{loc}$ satisfying $\|xx^{T}\|_{F}/\|zz^{T}\|_{F}=1/2$
and $x^{T}z=0$. Here, $\gamma$ controls initial SGD $x=\gamma w+(1-\gamma)x_{loc}$
where $w$ is random Gaussian. \textbf{(Left)} Error distribution
after 10,000 SGD steps (rate $10^{-3}$, momentum $0.9$) over 1,000
trials. Line: median. Inner bands: 5\%-95\% quantile. Outer bands:
min/max. \textbf{(Right top)} Random initialization $\gamma=1$ with
success; \textbf{(Right bottom)} Random initialization $\gamma=1$
with failure.}
\end{figure}
How is the performance of SGD affected by the presence of spurious
local minima? Given that spurious local minima cease to exist with
$\delta<1/5$, we might conjecture that the performance of SGD is
a decreasing function of $\delta$. Indeed, this conjecture is generally
supported by evidence from the nearly-isotropic measurement ensembles~\cite{bottou2008tradeoffs,bottou2010large,recht2013parallel,agarwal2014reliable},
all of which show improving performance with increasing number of
measurements $m$.

This section empirically measures SGD (with momentum, fixed learning
rates, and batchsizes of one) on two instances of (\ref{eq:lrmr})
with different values of $\delta$, both engineered to contain spurious
local minima by numerically solving (\ref{eq:lmiopt}). We consider
a ``bad'' instance, with $\delta=0.975$ and rank $r=2$, and a
``good'' instance, with $\delta=1/2$ and rank $r=1$. The condition
number of the ``bad'' instance is 25 times higher than the ``good''
instance, so classical theory suggests the former to be a factor of
5-25 times harder to solve than the former. Moreover, the ``good''
instance is locally strongly convex at its isolated global minima
while the ``bad'' instance is only locally weakly convex, so first-order
methods like SGD should locally converge at a linear rate for the
former, and sublinearly for the latter.

\textbf{SGD consistently succeeds on ``bad'' instance with $\delta=0.975$
and $r=2$.} We generate the ``bad'' instance by fixing $n=12$,
$r=2$, selecting $x,z\in\R^{n\times r}$ i.i.d. from the standard
Gaussian, rescale $z$ so that $\|zz^{T}\|_{F}=1$ and rescale $x$
so that $\|xx^{T}\|_{F}/\|zz^{T}\|_{F}\approx4$, and solving (\ref{eq:lmiopt});
the results are shown in Figure~\ref{fig:bad}. The results at $\gamma\approx0$
validate $x_{loc}$ as a true local minimum: if initialized here,
then SGD remains stuck here with $>100\%$ error. The results at $\gamma\approx1$
shows randomly initialized SGD either escaping our engineered spurious
local minimum, or avoiding it altogether. All 1,000 trials at $\gamma=1$
recover the ground truth to $<1\%$ accuracy, with 95\% quantile at
$\approx0.6\%$. 

\textbf{SGD consistently fails on ``good'' instance with $\delta=1/2$
and $r=1$.} We generate the ``good'' instance with $n=12$ and
$r=1$ using the procedure in the previous Section; the results are
shown in Figure~\ref{fig:good}. As expected, the results at $\gamma\approx0$
validate $x_{loc}$ as a true local minimum. However, even with $\gamma=1$
yielding a random initialization, 59 of the 1,000 trials still result
in an error of $>50\%$, thereby yielding a failure rate of $5.90\pm2.24\%$
up to three standard deviations. Examine the failed trials closer,
we do indeed find SGD hovering around our engineered spurious local
minimum. 

Repeating the experiment over other instances of (\ref{eq:lrmr})
obtained by solving (\ref{eq:lmiopt}) with randomly selected $x,z$,
we generally obtain graphs that look like Figure~\ref{fig:bad}.
In other words, SGD usually escapes spurious local minima even when
they are engineered to exist. These observations continue to hold
true with even massive condition numbers on the order of $10^{4}$,
with corresponding RIP constant $\delta=1-10^{-4}$. On the other
hand, we do occasionally sample well-conditioned instances that behave
closer to the ``good'' instance describe above, causing SGD to consistently
fail.

\section{Conclusions}

The nonconvex formulation of low-rank matrix recovery is highly effective,
despite the apparent risk of getting stuck at a spurious local minimum.
Recent results have shown that if the linear measurements of the low-rank
matrix satisfy a restricted isometry property (RIP), then the problem
contains\emph{ no spurious local minima}, so exact recovery is guaranteed.
Most of these existing results are based on a norm-preserving argument:
relating $\|\AA(xx^{T}-Z)\|\approx\|xx^{T}-Z\|_{F}$ and arguing that
a lack of spurious local minima in the latter implies a similar statement
in the former. 

Our key message in this paper is that moderate RIP is not enough to
eliminate spurious local minima. To prove this, we formulate a convex
optimization problem in Section~\ref{sec:LMI_formu} that generates
counterexamples that satisfy RIP but contain spurious local minima.
Solving this convex formulation in closed-form in Section~\ref{sec:Solutions}
shows that counterexamples are ubiquitous: almost any rank-1 $Z\succeq0$
and any $x\in\R^{n}$ can respectively be the ground truth and spurious
local minimum to an instance of matrix recovery satisfying RIP. We
gave one specific counterexample with RIP constant $\delta=1/2$ in
the introduction that causes randomly initialized stochastic gradient
descent (SGD) to fail 12\% of the time. 

Moreover, stochastic gradient descent (SGD) is often but not always
able to avoid and escape spurious local minima. In Section~\ref{sec:SGDescape},
randomly initialized SGD solved one example with a 100\% success rate
over 1,000 trials, despite the presence of spurious local minima.
However, it failed with a consistent rate of $\approx6\%$ on another
other example with an RIP constant of just $1/2$. Hence, as long
as spurious local minima exist, we cannot expect to guarantee exact
recovery with SGD (without a much deeper understanding of the algorithm). 

Overall, exact recovery guarantees will generally require a proof
of no spurious local minima. However, arguments based solely on norm
preservation are conservative, because most measurements are not isotropic
enough to eliminate spurious local minima. 

\section*{Acknowledgements}

We thank our three NIPS reviewers for helpful comments and suggestions.
In particular, we thank reviewer \#2 for a key insight that allowed
us to lower-bound $\delta^{\star}$ in Section~\ref{sec:MinDelta}.
This work was supported by the ONR Awards N00014-17-1-2933 and ONR
N00014-18-1-2526, NSF Award 1808859, DARPA Award D16AP00002, and AFOSR
Award FA9550- 17-1-0163.

\bibliographystyle{plain}
\bibliography{euclid}

\appendix

\section{\label{sec:Proof-of-Main}Proofs of Main Results}

Recall that we have defined
\begin{align*}
\L & :\S^{n^{2}}\to\R^{n\times r}\qquad\L(\H)=2\cdot\X^{T}\H e,\\
\L^{T} & :\R^{n\times r}\to\S^{n^{2}}\qquad\L^{T}(y)=ey^{T}\X^{T}+\X ye^{T}
\end{align*}
and also
\begin{align*}
\M & :\S^{n^{2}}\to\S^{nr}\qquad\M(\H)=2\cdot[I_{r}\otimes\mat(\H e)]+\X^{T}\H\X,\\
\M^{T} & :\S^{nr}\to\S^{n^{2}}\qquad\M^{T}(U)=\vec(U)e^{T}+e\vec(U)^{T}+\X U\X^{T}.
\end{align*}
Moreover, we use $\rho=\|x\|/\|z\|$ and $\phi=\arccos(x^{T}z/\|x\|\|z\|)$.

\subsection{Technical lemmas}

We begin by solving an eigenvalue LMI in closed-form.
\begin{lem}
\label{lem:dual_prob}Given $M\in\S^{n}$ with $\tr(M)\ge0$, we split
the matrix into a positive part $M_{+}$ and a negative part $M_{-}$
satisfying
\[
M=M_{+}-M_{-}\qquad\text{where }\quad M_{+},M_{-}\succeq0,\quad M_{+}M_{-}=0.
\]
Then the following problem has solution
\[
\tr(M_{-})/\tr(M_{+})=\min_{\begin{subarray}{c}
\alpha\in\R\\
U,V\succeq0
\end{subarray}}\{\tr(V):\tr(U)=1,\alpha M=U-V\}
\]
\end{lem}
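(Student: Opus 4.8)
The plan is to recognize the minimization as a small semidefinite program and pin down its optimal value by a matching pair of bounds. I will assume $M\neq 0$, so that $\tr(M)\ge 0$ forces $M$ to have a positive eigenvalue and hence $\tr(M_+)>0$ (the case $M=0$ being degenerate, with both sides ill-defined). Recall that the splitting $M=M_+-M_-$ with $M_\pm\succeq 0$ and $M_+M_-=0$ is just the Jordan (spectral) decomposition of $M$ into its positive and negative parts, and is unique; let $P_+,P_-$ denote the orthogonal projections onto the corresponding eigenspaces, so that $P_+MP_+=M_+$ and $P_-MP_-=-M_-$, while $0\preceq P_\pm\preceq I$.

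For the easy direction (a feasible point attaining the claimed value) I would simply exhibit a minimizer: take $\alpha=1/\tr(M_+)$, $V=\alpha M_-$, and $U=\alpha M+V=\alpha M_+$. Then $U,V\succeq 0$, $\tr(U)=\tr(M_+)/\tr(M_+)=1$, the equality $\alpha M=U-V$ holds by construction, and $\tr(V)=\tr(M_-)/\tr(M_+)$.

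For the lower bound I would show that every feasible triple $(\alpha,U,V)$ satisfies $\tr(V)\ge\tr(M_-)/\tr(M_+)$. The first step is to take the trace of $\alpha M=U-V$ and use $\tr(U)=1$ to obtain the identity $\tr(V)=1-\alpha\tr(M)$; this eliminates $U$ and reduces the problem to bounding the scalar $\alpha$. The second step compresses the constraint $U=\alpha M+V\succeq 0$ onto the negative eigenspace: $0\preceq P_-UP_-=\alpha P_-MP_-+P_-VP_-=-\alpha M_-+P_-VP_-$, hence $P_-VP_-\succeq\alpha M_-$, and since $\tr(V)\ge\tr(P_-VP_-)$ (because $V\succeq 0$ and $0\preceq P_-\preceq I$) we get $\tr(V)\ge\alpha\tr(M_-)$. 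Combined with $\tr(V)=1-\alpha\tr(M)$ this forces $\alpha\tr(M_+)=\alpha(\tr(M)+\tr(M_-))\le 1$, i.e. $\alpha\le 1/\tr(M_+)$. Finally, since $\tr(M)\ge 0$ we have $\alpha\tr(M)\le\tr(M)/\tr(M_+)$ (trivially when $\alpha<0$), so $\tr(V)=1-\alpha\tr(M)\ge 1-\tr(M)/\tr(M_+)=\tr(M_-)/\tr(M_+)$, matching the feasible point above and establishing the claimed minimum.

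The only real subtlety — and the step I expect to need the most care — is the lower bound on $\alpha$: the trick is not to bound $\tr(V)$ head-on, but first to use the trace identity to trade $\tr(V)$ for $\alpha$, and then to recover an inequality on $\alpha$ by testing $U\succeq 0$ against the negative eigenspace of $M$ through the compression $P_-(\cdot)P_-$. One should also note that no sign restriction on $\alpha$ is needed: negative $\alpha$ is harmless, since then $\tr(V)=1-\alpha\tr(M)\ge 1\ge\tr(M_-)/\tr(M_+)$, the last inequality being exactly the hypothesis $\tr(M_+)\ge\tr(M_-)$, i.e. $\tr(M)\ge 0$.
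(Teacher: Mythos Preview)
Your argument is correct. It differs from the paper's proof, which proceeds by Lagrangian duality: the paper introduces a multiplier $\beta$ for the constraint $\tr(U)=1$, solves the inner minimization over $U,V\succeq 0$ in closed form as $\alpha[\tr(M_-)+\beta\tr(M_+)]$, and then reads off $\beta$ from the resulting equality constraint. Your approach is purely primal: you exhibit the optimizer directly, and for the lower bound you combine the trace identity $\tr(V)=1-\alpha\tr(M)$ with the compression inequality $\tr(V)\ge\tr(P_-VP_-)\ge\alpha\tr(M_-)$ to pin down $\alpha\le 1/\tr(M_+)$. This is more elementary---no appeal to strong duality or to the closed-form inner minimization---and it makes the role of the hypothesis $\tr(M)\ge 0$ completely transparent (it is exactly what ensures $\tr(M_-)/\tr(M_+)\le 1$ and hence handles $\alpha<0$). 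The paper's dual route is more mechanical and generalizes readily to other trace constraints, but your compression trick against $P_-$ is the cleaner way to see why the Jordan splitting is optimal here.
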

\begin{proof}
Write $p^{\star}$ as the optimal value. Then, 
\begin{align*}
p^{\star}= & \max_{\beta}\min_{\begin{subarray}{c}
\alpha\in\R\\
U,V\succeq0
\end{subarray}}\{\tr(V)+\beta\cdot[\tr(U)-1]:\alpha M=U-V\}\\
= & \max_{\beta\ge0}\min_{\alpha\in\R}\{-\beta+\min_{U,V\succeq0}\{\tr(V)+\beta\cdot\tr(U):\alpha M=U-V\}\}\\
= & \max_{\beta\ge0}\min_{\alpha\in\R}\{-\beta+\alpha\cdot[\tr(M_{-})+\beta\cdot\tr(M_{+})]\}\\
= & \max_{\beta\ge0}\{-\beta:\tr(M_{-})+\beta\cdot\tr(M_{+})=0\}\\
= & \tr(M_{-})/\tr(M_{+}).
\end{align*}
The first line converts an equality constraint into a Lagrangian.
The second line isolates the optimization over $U,V\succeq0$ with
$\beta\ge0$, noting that $\beta<0$ would yield $\tr(U)\to\infty$.
The third line solves the minimization over $U,V\succeq0$ in closed-form.
The fourth line views $\alpha$ as a Lagrange multiplier. 
\end{proof}
The matrix $\L^{T}(y)$ is rank-2 with the following eigenvalues.
\begin{lem}
\label{lem:eig_bound}The matrix $\L^{T}(y)$ is rank-2, and its two
nonzero eigenvalues are
\begin{equation}
\|\X y\|\|e\|(\cos\theta_{y}\pm1),\quad\text{where }\cos\theta_{y}=\frac{e^{T}\X y}{\|e\|\|\X y\|}.\label{eq:eig_bound}
\end{equation}
\end{lem}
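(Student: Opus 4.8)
The plan is to recognize $\L^{T}(y)$ as a symmetric rank-two (dyadic) perturbation and to diagonalize it on a two-dimensional invariant subspace. Writing $a\equiv e$ and $b\equiv\X y$, both vectors in $\R^{n^{2}}$, the definition $\L^{T}(y)=ey^{T}\X^{T}+\X ye^{T}$ becomes $\L^{T}(y)=ab^{T}+ba^{T}$. Any vector orthogonal to both $a$ and $b$ lies in the kernel, so $\range\L^{T}(y)\subseteq\mathrm{span}\{a,b\}$ and in particular $\rank\L^{T}(y)\le2$; moreover every eigenvector associated with a nonzero eigenvalue must lie in $\mathrm{span}\{a,b\}$.

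First I would dispatch the degenerate case in which $a$ and $b$ are colinear (including $b=0$): then $\L^{T}(y)$ is a scalar multiple of $aa^{T}$ (or zero), one has $\cos\theta_{y}=\pm1$, and the claimed pair $\|\X y\|\|e\|(\cos\theta_{y}\pm1)$ correctly reduces to one nonzero value and one zero, consistent with $\rank\le1$. In the generic case, with $a,b$ linearly independent, I would substitute the ansatz $v=\alpha a+\beta b$ into $\L^{T}(y)v=\lambda v$. Using the abbreviations $p\equiv\|a\|^{2}=\|e\|^{2}$, $q\equiv\|b\|^{2}=\|\X y\|^{2}$, and $s\equiv a^{T}b=e^{T}\X y$, a direct expansion gives $\L^{T}(y)v=(\alpha s+\beta q)a+(\alpha p+\beta s)b$, so by linear independence the eigenvalue equation is equivalent to the $2\times2$ system
\[
\begin{bmatrix}s & q\\ p & s\end{bmatrix}\begin{bmatrix}\alpha\\ \beta\end{bmatrix}=\lambda\begin{bmatrix}\alpha\\ \beta\end{bmatrix}.
\]

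The characteristic polynomial of this $2\times2$ matrix is $(\lambda-s)^{2}-pq$, whose roots are $\lambda=s\pm\sqrt{pq}$. Both roots are nonzero whenever $s^{2}\ne pq$, i.e. whenever $a,b$ are not colinear, which confirms $\rank\L^{T}(y)=2$ in this case. Substituting $\sqrt{pq}=\|e\|\,\|\X y\|$ and $s=e^{T}\X y=\|e\|\,\|\X y\|\cos\theta_{y}$ with $\cos\theta_{y}\equiv(e^{T}\X y)/(\|e\|\,\|\X y\|)$ then yields the two nonzero eigenvalues $\|\X y\|\,\|e\|(\cos\theta_{y}\pm1)$, as claimed. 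The only real subtlety is the colinear boundary case, which I would handle separately at the outset; the rest is the routine reduction of a dyadic-perturbation eigenproblem to a $2\times2$ problem.
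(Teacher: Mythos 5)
Your proof is correct and follows essentially the same route as the paper: both arguments restrict $\L^{T}(y)=e(\X y)^{T}+(\X y)e^{T}$ to the two-dimensional subspace spanned by $e$ and $\X y$ and read off the eigenvalues of the resulting $2\times2$ matrix (the paper orthogonalizes the basis first, you work in the non-orthogonal basis $\{e,\X y\}$, a cosmetic difference). Your explicit treatment of the colinear/degenerate case is a small added rigor the paper omits.
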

\begin{proof}
We project $\X y$ onto $e$ and define $q$ as the residual, as in
$\X y=\alpha e+q$ with $\alpha=(e^{T}\X y)/\|e\|^{2}$. Then we have
the similarity relation
\[
\L^{T}(y)=\begin{bmatrix}e & q\end{bmatrix}\begin{bmatrix}2\alpha & 1\\
1 & 0
\end{bmatrix}\begin{bmatrix}e & q\end{bmatrix}^{T}\sim\|e\|\cdot\begin{bmatrix}2\alpha\|e\| & \|q\|\\
\|q\| & 0
\end{bmatrix},
\]
and the $2\times2$ matrix has eigenvalues $\|\alpha e\|^{2}\pm\sqrt{\|\alpha e\|^{2}+\|q\|^{2}}$.
Substituting $\|\X y\|^{2}=\|\alpha e\|^{2}+\|q\|^{2}$ completes
the proof.
\end{proof}
Also, the angle between $e$ and $\range(\X)$ is closely associated
with the angle between $x$ and $z$. 
\begin{lem}
\label{lem:angle_bound}Define the incidence angle $\theta$ between
$e$ and $\range(\X)$ as
\begin{equation}
\theta=\arccos\left(\max_{y}\frac{e^{T}\X y}{\|e\|\|\X y\|}\right).\label{eq:theta_def}
\end{equation}
Then, the angle has value
\[
\sin\theta=\frac{(\|z\|\sin\phi)^{2}}{\|e\|}=\frac{\sin^{2}\phi}{\sqrt{(\rho^{2}-1)^{2}+2\rho^{2}\sin^{2}\phi}}.
\]
\end{lem}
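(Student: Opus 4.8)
The plan is to recast \eqref{eq:theta_def} as an orthogonal projection problem. Write $P$ for the orthogonal projector onto $\range(\X)$. Since $\X y\in\range(\X)$, we have $e^{T}\X y=(Pe)^{T}\X y$ for every $y$, so by Cauchy--Schwarz the maximum in \eqref{eq:theta_def} equals $\|Pe\|/\|e\|$ (attained at any $y$ with $\X y=Pe$). Consequently $\sin\theta=\mathrm{dist}(e,\range(\X))/\|e\|$, and it suffices to compute the Frobenius distance from the matrix $E=xx^{T}-zz^{T}$ to the subspace $\mathcal{S}_{x}\equiv\{xu^{T}+ux^{T}:u\in\R^{n}\}$, whose vectorization is precisely $\range(\X)$.

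First I would observe that $xx^{T}\in\mathcal{S}_{x}$ (take $u=x/2$), so $\mathrm{dist}(E,\mathcal{S}_{x})=\mathrm{dist}(zz^{T},\mathcal{S}_{x})$ because $\mathcal{S}_{x}$ is a subspace. Then I split $z=z_{\parallel}+z_{\perp}$ into its components parallel and orthogonal to $x$, so that $z_{\parallel}=cx$ with $c=x^{T}z/\|x\|^{2}$ and $\|z_{\perp}\|=\|z\|\sin\phi$. Expanding $zz^{T}=z_{\parallel}z_{\parallel}^{T}+(z_{\parallel}z_{\perp}^{T}+z_{\perp}z_{\parallel}^{T})+z_{\perp}z_{\perp}^{T}$, the first group equals $c^{2}xx^{T}\in\mathcal{S}_{x}$ and the middle group equals $x(cz_{\perp})^{T}+(cz_{\perp})x^{T}\in\mathcal{S}_{x}$. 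It remains only to check that the transverse rank-one piece is orthogonal to $\mathcal{S}_{x}$: for every $u$, $\langle z_{\perp}z_{\perp}^{T},\,xu^{T}+ux^{T}\rangle=2(z_{\perp}^{T}x)(z_{\perp}^{T}u)=0$ since $z_{\perp}\perp x$. Hence the orthogonal residual of $zz^{T}$ (and of $E$) relative to $\mathcal{S}_{x}$ is exactly $z_{\perp}z_{\perp}^{T}$, giving $\mathrm{dist}(E,\mathcal{S}_{x})=\|z_{\perp}z_{\perp}^{T}\|_{F}=\|z_{\perp}\|^{2}=(\|z\|\sin\phi)^{2}$, which is the first claimed form of $\|e\|\sin\theta$.

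The second equality is then a direct computation: $\|e\|^{2}=\|xx^{T}-zz^{T}\|_{F}^{2}=\|x\|^{4}-2(x^{T}z)^{2}+\|z\|^{4}=\|z\|^{4}\big[(\rho^{2}-1)^{2}+2\rho^{2}\sin^{2}\phi\big]$, so dividing $(\|z\|\sin\phi)^{2}$ by $\|e\|$ yields $\sin^{2}\phi/\sqrt{(\rho^{2}-1)^{2}+2\rho^{2}\sin^{2}\phi}$. There is no genuine obstacle here beyond spotting the two ingredients — the reduction to an orthogonal projection, and the fact that $xx^{T}$ already lies in $\range(\X)$ so that only the transverse rank-one piece $z_{\perp}z_{\perp}^{T}$ contributes to the distance; everything else is routine bookkeeping with Frobenius inner products.
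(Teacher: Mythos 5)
Your proof is correct and follows essentially the same route as the paper's: both decompose $z$ into its components along and orthogonal to $x$ and identify the irreducible residual of $e$ relative to $\range(\X)$ as the rank-one piece $z_{\perp}z_{\perp}^{T}$ (the paper's $ww^{T}$), of Frobenius norm $\|z\|^{2}\sin^{2}\phi$. The only difference is presentational — the paper carries out the least-squares minimization $\min_{y}\|\X y-e\|_{F}$ in a $2\times2$ coordinate system, while you exhibit the orthogonal decomposition and verify $\langle z_{\perp}z_{\perp}^{T},xu^{T}+ux^{T}\rangle=0$ directly, which is arguably a cleaner justification of the same computation.
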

\begin{proof}
We project $z$ onto $\range(x)$ and define $w$ as the residual,
as in $z=x\alpha+w$ where $\alpha=(x^{T}z)/\|x\|^{2}$. Then, we
have the similarity relation
\[
xx^{T}-zz^{T}=\begin{bmatrix}x & w\end{bmatrix}\begin{bmatrix}(1-\alpha^{2})I_{r} & -\alpha I_{r}\\
-\alpha I_{r} & -I_{r}
\end{bmatrix}\begin{bmatrix}x & w\end{bmatrix}^{T}\sim\begin{bmatrix}(1-\alpha^{2})\|x\|^{2} & -\alpha\|x\|\|w\|\\
-\alpha\|x\|\|w\| & -\|w\|^{2}
\end{bmatrix},
\]
and may solve the problem of projecting $e$ onto $\range(\X)$ after
a change of basis
\begin{align*}
\|e\|\sin\theta= & \min_{y}\|\X y-e\|\\
= & \min_{y}\|xy^{T}+yx^{T}-(xx^{T}-zz^{T})\|_{F},\\
= & \min_{\tilde{y}_{1},\tilde{y}_{2}}\left\Vert \begin{bmatrix}\tilde{y}_{1} & \tilde{y}_{2}\\
\tilde{y}_{2} & 0
\end{bmatrix}-\begin{bmatrix}(1-\alpha^{2})\|x\|^{2} & -\alpha\|x\|\|w\|\\
-\alpha\|x\|\|w\| & -\|w\|^{2}
\end{bmatrix}\right\Vert _{F},\\
= & \|w\|^{2}=\|z\|^{2}\sin^{2}\phi.
\end{align*}
This proves the first equality. On the other hand, we have
\begin{align}
\|e\|=\|xx^{T}-zz\|_{F} & =\sqrt{\|x\|^{4}+\|z\|^{4}-2(x^{T}z)^{2}}=\|z\|^{2}\sqrt{\rho^{4}+1-2\rho^{2}\cos\phi}.\label{eq:e_phirho}
\end{align}
Completing the square and substituting yields the second equality.
\end{proof}
\begin{lem}
\label{lem:soc}Let $\hat{\H}$ be the optimal choice in Theorem~\ref{thm:foc}.
Then
\[
\|\mat(\hat{\H}e)\|\le\sqrt{1+\rho^{4}}\|z\|^{2},\qquad\lambda_{\min}(\X^{T}P_{e\perp}\X)\ge2\|x\|^{2}\zeta^{2}
\]
where $\zeta$ was defined in (\ref{eq:lb}).
\end{lem}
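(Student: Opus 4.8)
The plan is to establish the two bounds separately; the first uses only that $\hat{\H}$ is a contraction, and the second does not involve $\hat{\H}$ at all.

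\textbf{The bound on $\|\mat(\hat{\H}e)\|$.} The first-order condition $\L(\hat{\H})=2\X^{T}\hat{\H}e=0$ says exactly that $\hat{\H}e$ is orthogonal to $\range(\X)$; moreover, the optimizer $\hat{\H}$ supplied by the proof of Theorem~\ref{thm:foc} is normalized as in~(\ref{eq:lmiopt}), so $\hat{\H}\preceq I$ and hence $\|\hat{\H}\|\le1$. (Concretely, $\hat{\H}e$ is just the residual of orthogonally projecting $e$ onto $\range(\X)$, rescaled by a factor in $(0,1]$.) Since $\vec$ and $\mat$ preserve the Euclidean--Frobenius norm and the spectral norm never exceeds the Frobenius norm,
\[
\|\mat(\hat{\H}e)\|\le\|\mat(\hat{\H}e)\|_{F}=\|\hat{\H}e\|\le\|\hat{\H}\|\,\|e\|\le\|e\|=\|xx^{T}-zz^{T}\|_{F}.
\]
It remains to estimate $\|e\|$: expanding, $\|xx^{T}-zz^{T}\|_{F}^{2}=\|x\|^{4}-2(x^{T}z)^{2}+\|z\|^{4}\le\|x\|^{4}+\|z\|^{4}=(1+\rho^{4})\|z\|^{4}$, and the bound $\|\mat(\hat{\H}e)\|\le\sqrt{1+\rho^{4}}\,\|z\|^{2}$ follows. (In fact, tracking the closed form of $\hat{\H}$ gives the sharper $\|\mat(\hat{\H}e)\|\le\|z\|^{2}\sin^{2}\phi$, so the crude estimate is more than enough.)

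\textbf{The bound on $\lambda_{\min}(\X^{T}P_{e\perp}\X)$.} Expanding $\|xu^{T}+ux^{T}\|_{F}^{2}=2\|x\|^{2}\|u\|^{2}+2(x^{T}u)^{2}$ shows that, in the rank-$1$ case, $\X^{T}\X=2\|x\|^{2}I+2xx^{T}\succeq2\|x\|^{2}I$, so $\|\X u\|^{2}\ge2\|x\|^{2}\|u\|^{2}$ for all $u$. Next, by the definition~(\ref{eq:theta_def}) of the incidence angle $\theta$ between $e$ and $\range(\X)$ — and since $\range(\X)$ is a subspace, so $y\mapsto-y$ reverses the sign of $e^{T}\X y$ — we get $(e^{T}\X u)^{2}\le\cos^{2}\theta\cdot\|e\|^{2}\|\X u\|^{2}$ for every $u$. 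Writing $P_{e\perp}=I-ee^{T}/\|e\|^{2}$ for the orthogonal projector onto $e^{\perp}$ and using that it is symmetric and idempotent,
\[
u^{T}\X^{T}P_{e\perp}\X u=\|\X u\|^{2}-\frac{(e^{T}\X u)^{2}}{\|e\|^{2}}\ge\|\X u\|^{2}(1-\cos^{2}\theta)=\|\X u\|^{2}\sin^{2}\theta\ge2\|x\|^{2}\sin^{2}\theta\,\|u\|^{2}.
\]
Minimizing the Rayleigh quotient over $u$ gives $\lambda_{\min}(\X^{T}P_{e\perp}\X)\ge2\|x\|^{2}\sin^{2}\theta$, and since Lemma~\ref{lem:angle_bound} identifies $\sin^{2}\theta$ with $\zeta^{2}$, the proof is complete.

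The routine parts are the two elementary Frobenius-norm expansions and the projector identity, so I expect the only real subtlety to be bookkeeping around $\hat{\H}$. For the first bound one must invoke the normalization fixed in the proof of Theorem~\ref{thm:foc}, namely $\hat{\H}\preceq I$ (equivalently, that $\hat{\H}e$ is the orthogonal residual of $e$ against $\range(\X)$ with no magnification), to conclude $\|\hat{\H}e\|\le\|e\|$. The second bound genuinely relies on $r=1$, via $\lambda_{\min}(\X^{T}\X)=2\|x\|^{2}$; one should also note that $\X u\ne0$ whenever $u\ne0$ (injectivity of $\X$ when $x\ne0$ and $r=1$) and that $e\ne0$ (because $x,z$ are not colinear), so that $\theta$ and the inequality $(e^{T}\X u)^{2}\le\cos^{2}\theta\cdot\|e\|^{2}\|\X u\|^{2}$ are well posed.
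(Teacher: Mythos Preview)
Your proof is correct and follows essentially the same approach as the paper. The only cosmetic difference is in the first bound: the paper bounds the spectral norm directly via the quadratic form $u^{T}\mat(\hat{\H}e)u=(u\otimes u)^{T}\hat{\H}e\le\|u\|^{2}\|e\|$, whereas you pass through the Frobenius norm via $\|\mat(\hat{\H}e)\|\le\|\mat(\hat{\H}e)\|_{F}=\|\hat{\H}e\|\le\|e\|$; both rely on $\|\hat{\H}\|\le1$ and the same estimate $\|e\|^{2}\le(1+\rho^{4})\|z\|^{4}$, and your second bound is identical to the paper's.
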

\begin{proof}
For the first bound, we have
\begin{align*}
u^{T}\mat(\hat{\H}e)u=(u\otimes u)^{T}\hat{\H}e & \le\|u\otimes u\|\|\hat{H}\|\|e\|=\|u\|^{2}\|e\|,
\end{align*}
and $\|e\|^{2}=\|z\|^{4}(1-\rho^{2}\cos\phi+\rho^{4})\le\|z\|^{4}(1+\rho^{4})$
from (\ref{eq:e_phirho}). For the second bound, define $\theta$
as the angle between $e$ and $\range(\X)$ in (\ref{lem:angle_bound}),
and note that $\zeta$ in (\ref{eq:lb}) satisfies $\zeta=\sin\theta$
by construction via Lemma~\ref{lem:angle_bound}. Then,
\begin{align*}
v^{T}(\X^{T}P_{e\perp}\X)v & =\|P_{e\perp}\X v\|^{2}\text{ because projections are idempotent: }P_{e\perp}=P_{e\perp}^{2}\\
 & =\min_{\alpha\in\R}\|\X v-e\alpha\|^{2}=\min_{\alpha\in\R}\{\|\X v\|^{2}-2\alpha e^{T}\X v+\alpha^{2}\|e\|^{2}\}\\
 & \ge\min_{\alpha\in\R}\{\|\X v\|^{2}-2\alpha\|e\|\|\X v\|\cos\theta+\alpha^{2}\|e\|^{2}\}\\
 & \qquad\text{whose minimum is attained at }\alpha=\|\X v\|\cos\theta\\
 & =\|\X v\|^{2}(1-\cos^{2}\theta)=\|\X v\|^{2}\sin^{2}\theta,
\end{align*}
and 
\[
\|\X v\|^{2}=\|xv^{T}+vx^{T}\|_{F}^{2}=2\|x\|^{2}\|v\|^{2}+2(x^{T}v)^{2}\ge2\|x\|^{2}\|v\|^{2}.
\]
Finally, dividing by $\|v\|^{2}$ yields the desired bound.
\end{proof}

\subsection{Proof of Theorem~\ref{thm:foc}}

The problem of finding the best-conditioned $\H$ satisfying $\L(\H)=0$
is the following primal-dual LMI pair
\begin{align}
\underset{\H,\eta}{\text{maximize }} & \eta & \underset{y,U_{1},U_{2}}{\text{minimize }} & \tr(U_{2})\label{eq:foclmi}\\
\text{subject to } & \L(\H)=0, & \text{subject to } & \L^{T}(y)=U_{1}-U_{2},\nonumber \\
 & \eta I\preceq\H\preceq I. &  & \tr(U_{1})=1,\quad U_{1},U_{2}\succeq0,\nonumber 
\end{align}
where $\L^{T}$ is the adjoint operator to $\L$ in (\ref{eq:Ldef}).
Slater's condition is trivially satisfied by the dual: $y=0$ and
$U_{1}=U_{2}=\nu^{-1}I$ with $\nu=\frac{1}{2}n(n+1)$ is a strictly
feasible point. Hence, strong duality holds, meaning that the two
objectives coincide with $\tr(U_{2}^{\star})=\eta^{\star}$ at optimality,
so we implicitly solve the primal by solving the dual. 

The mechanics of the dual problem become more obvious if we first
optimize over $U_{1}$ and $U_{2}$ and the length of $y$. Applying
Lemma~\ref{lem:dual_prob} yields 
\begin{equation}
\underset{y}{\text{minimize }}\frac{\sum_{i=1}^{n}(-\lambda_{i}(\L^{T}(y))_{+}}{\sum_{i=1}^{n}(+\lambda_{i}(\L^{T}(y))_{+}}\qquad\text{ where }\qquad(\alpha)_{+}\equiv\begin{cases}
\alpha & \alpha\ge0\\
0 & \alpha<0
\end{cases}.\label{eq:dual_foc}
\end{equation}
The goal of this latter problem is to find a vector $y$ that maximizes
the sum of the positive eigenvalues of $\L^{T}(y)$, while minimizing
the (absolute) sum of the negative eigenvalues. In Lemma~\ref{lem:eig_bound},
we prove that $\L^{T}(y)$ has exactly one positive eigenvalue and
one negative eigenvalue, and their values in the rank-1 case are closely
related to the angle $\phi$ between $x$ and $z$. Substituting this
into (\ref{eq:dual_foc}) yields an unconstrained minimization
\[
\underset{y}{\text{minimize }}\frac{1-\cos\theta_{y}}{1+\cos\theta_{y}}\qquad\text{ where }\qquad\cos\theta_{y}=\frac{e^{T}\X y}{\|e\|\|\X y\|}.
\]
In turn, Lemma~\ref{lem:angle_bound} yields $\max_{y}\cos\theta_{y}=\cos\theta=\sqrt{1-\sin^{2}\theta}$
where $\sin\theta\equiv\zeta$ in the statement of Theorem~\ref{thm:foc}. 

\subsection{Proof of Theorem~\ref{thm:soc}}

We show that $\H_{\tau}\equiv\tau P_{e\perp}+\H_{0}$ with some $\tau\ge0$
is a feasible point for (\ref{eq:lmiopt}) with a small condition
number. Here, $P_{e\perp}=I-ee^{T}/\|e\|^{2}$ is the projection onto
the kernel of $e$, and $\H_{0}\preceq I$ is the best-conditioned
$\H\succeq0$ satisfying $\L(\H)=0$ from Theorem~\ref{thm:foc}.
Observe that $\cond(\H_{\tau})=(1+\tau)\cdot\cond(\H_{0})$.

Let us find the smallest $\tau\ge0$ to guarantee that $\L(\H_{\tau})=0$
and $\M(\H_{\tau})\succeq\mu I$, for some choice of $\mu>0$. Note
that $\L(\H_{\tau})=0$ is satisfied by construction, because $\L(\H_{\tau})=\tau\L(P_{e\perp})+(1-\tau)\L(\H_{0})$,
and $\L(\H_{0})=0$ by hypothesis while $\L(P_{e\perp})=2\X^{T}(P_{e\perp})e=0$.
Hence, our only difficulty is finding the smallest $0\le\tau<1$ such
that 
\[
\M(\H_{\tau})=2\mat(\H_{0}e)+\tau\X^{T}P_{e\perp}\X^{T}+\X^{T}\H_{0}\X^{T}\succ0.
\]
In Lemma~\ref{lem:soc}, we prove the following two inequalities
\[
\|\mat(\H_{0}e)\|\le\sqrt{1+\rho^{4}}\|z\|^{2},\qquad\lambda_{\min}(\X^{T}P_{e\perp}\X)\ge2\|x\|^{2}\zeta^{2}
\]
Hence, $\M(\H_{\tau})\succeq\mu I$ with $\mu=\sqrt{1+\rho^{4}}\|z\|^{2}\ge\|z\|^{2}$
is guaranteed if we set
\[
\frac{4\|\mat(\H_{0}e)\|}{\lambda_{\min}(\X^{T}P_{e\perp}\X)}\le\frac{4\sqrt{1+\rho^{4}}\|z\|^{2}}{2\|x\|^{2}\zeta^{2}}=\frac{2\sqrt{\rho^{2}+\rho^{-2}}}{\zeta^{2}}=\tau.
\]
Rescaling $\H_{\tau}$ by $1/(1+\tau)$ completes the proof for the
feasibility statement.

Finally, to derive the RIP constant bound $\delta_{2r}\le(\tau+\sqrt{1-\zeta^{2}})/(\tau+1)$,
write $\delta\equiv\sqrt{1-\zeta^{2}}$ and note that we have
\[
\eta_{2}^{\star}\ge\frac{1-\delta}{1+\tau}\cdot\frac{1}{1+\delta}=\left(1-\frac{\tau+\delta}{1+\tau}\right)\cdot\frac{1}{1+\delta}\ge\frac{1-(\tau+\delta)/(\tau+1)}{1+(\tau+\delta)/(\tau+1)},
\]
where the last bound is due to the fact that $(\tau+\delta)/(\tau+1)\ge\delta$
holds for all $\tau,\delta\ge0$. Multiplying through by $1+(\tau+\delta)/(\tau+1)$
yields
\[
\left(1-\frac{\tau+\delta}{\tau+1}\right)\|X\|_{F}^{2}\le\left(1+\frac{\tau+\delta}{\tau+1}\right)\|\AA(X)\|_{F}^{2}\le\left(1+\frac{\tau+\delta}{\tau+1}\right)\|X\|_{F}^{2}
\]
for all $X$. 
\end{document}